\documentclass[11pt,a4paper]{article}
\usepackage[margin=27mm]{geometry}
\usepackage[T1]{fontenc}
\usepackage{lmodern}
\usepackage{amsmath,amssymb,amsthm,mathtools}
\usepackage{booktabs,array,graphicx}
\usepackage{microtype}
\usepackage[section]{placeins}

\usepackage[hidelinks]{hyperref}
\newtheorem{proposition}{Proposition}
\newtheorem{lemma}{Lemma}
\theoremstyle{definition}
\newtheorem{definition}{Definition}
\newtheorem{example}{Example}
\theoremstyle{remark}

\newcommand{\R}{\mathbb R}
\newcommand{\tr}{\operatorname{tr}}
\newcommand{\col}{\operatorname{col}}
\newcommand{\nul}{\operatorname{null}}
\newcommand{\Var}{\operatorname{Var}}

\title{What Can a Gaussian Process Design Test?\\[4pt]
\large Structural and Prior-Based Detection from a Gale Perspective}
\author{Ivan De Boi\\Department of Electromechanical Engineering\\University of Antwerp\\Campus Groenenborger -- Z373\\Groenenborgerlaan 171, 2020 Antwerp, Belgium\\\texttt{ivan.deboi@uantwerpen.be} \and Marnix Van Soom\\Artificial Intelligence Lab\\Vrije Universiteit Brussel\\Pleinlaan 2, 1050 Brussels, Belgium\\\texttt{marnix@ai.vub.ac.be}}
\date{}

\begin{document}
\maketitle
\begin{abstract}
A Gaussian process (GP) model can agree with the data for two reasons: its assumptions are right, or the chosen inputs could never have shown that they are wrong. The distinction can be checked from the design before any responses are observed. Every model implies relations that its noiseless responses must satisfy at the chosen inputs, such as ``the middle value lies on the line through its two neighbours''. For GPs built from finitely many features, these relations are exactly the null space of the kernel matrix. Gale duality gives them a geometric interpretation, in which each observation has a vector and the smallest groups of observations that can expose an error are the circuits. For other kernels the relations become soft: response patterns may be improbable under the prior rather than algebraically impossible. A standard test then combines two kinds of evidence. Structural evidence comes from a violated relation and grows without limit as the noise falls. Prior-based evidence only says that a departure is improbable under the prior. With all inputs at the two ends of an interval, for example, a GP can reject a straight line against a large curvature, but only because the implied intercept is improbable, never because curvature was seen. In simulations the predicted power matched the observed rejection rates. Choosing the next input by predicted power raised the power against a localised discrepancy from $0.48$ to $0.72$, against $0.51$ when choosing by predictive variance, and a grid in two dimensions contained exact tests of additivity that a Latin hypercube lacked. The test itself is classical. The contribution is the prospective reading of that test: before observing the responses, the design already determines what kind of contradiction it can produce.
\end{abstract}

\section{Introduction}
Gaussian processes (GPs) are widely used as surrogate models when data are expensive, most prominently in Bayesian optimisation and active learning \cite{rasmussen2006,shahriari2016,garnett2023}, with applications ranging from the calibration of measurement systems \cite{deboi2022} to clinical assessment \cite{deboi2024} and the tuning of accelerator beamlines \cite{ramosgarces2024}. Their appeal rests on calibrated uncertainty, but that calibration is conditional on the model. A GP model can pass the checks applied to the collected data for two very different reasons: because its assumptions are adequate, or because the data could never have contradicted them. Consider a straight-line model and a design that places all observations at the two ends of an interval. A straight line can always be drawn through the mean responses at two inputs, so the data agree with the model whatever the true curvature is, and no amount of further data at the same inputs can change this. Agreement with the data is then no evidence that the function is straight.

Diagnosing a model therefore involves two different questions, which this paper keeps apart. The first is \emph{detection}: after the data are collected, are some observations incompatible with the model? Classical regression diagnostics and their GP counterparts answer this question \cite{cook1977,belsley1980,cook1982,bastos2009}. The second is \emph{testability}: before the data are collected, which violations of the model could the design reveal at all, and with what probability? For linear models and a specified alternative, designs for detecting model inadequacy \cite{box1959,atkinson1972,jones1978} and for discriminating between rival models \cite{atkinson1975,dette2009} answer this question. For GPs, designs that discriminate between specified rival models exist \cite{gardner2015,olofsson2018}. This work makes explicit, for a single GP model, which departures a given design can detect and with what power. Such an account must handle correlated observations and kernels that need not correspond to a finite set of features. This matters because the designs used with GPs are usually chosen for other purposes. Observations placed where predictive uncertainty is largest need not be the ones that would most strongly challenge the model assumptions.

Figure~\ref{fig:teaser} shows both points on simple examples. In panel (a), a straight-line model is tested against a curvature $cx^2$ with $40$ observations. When all inputs lie at the two ends of an interval, the residual test stays at its level for every curvature. A GP test would eventually reject a very large curvature at that design, but only because the intercept it implies is improbable under the prior, not because any relation is violated (Section~\ref{sec:tests}). Moving four of the observations to the centre makes the same curvature structurally detectable. In panel (b), the next observation that would best test a GP against a narrow bump lies at the bump, where the predictive variance is small, and not at the boundary, where it is largest. Both answers are available before any response is measured.

\begin{figure}[!htbp]
\centering
\includegraphics[width=\linewidth]{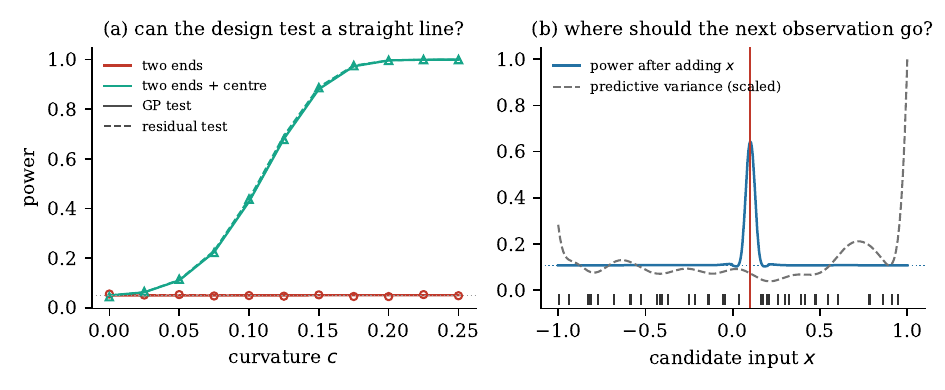}
\caption{What a design can test, computed before any data are collected (details in Section~\ref{sec:experiments}). (a) Power of two tests of a straight-line model against a curvature $cx^2$, for $40$ inputs at the two ends of $[-1,1]$, and for $36$ inputs at the ends with $4$ at the centre. Lines show predicted power, markers simulated rejection rates. The end-point design contains no consistency relation that involves curvature. (b) Power of a GP test against a narrow bump at $x_0=0.1$ (vertical line) after adding one observation at $x$ (solid), and the current predictive variance (dashed, scaled). Ticks mark the current inputs, and the dotted line the current power.}\label{fig:teaser}
\end{figure}

A model can only produce certain response patterns. Every linear relation shared by all of those patterns gives a check on the data. We call such a relation a \emph{consistency relation}. If a function is assumed to be a straight line, for example, then at three equally spaced inputs the middle value must equal the average of its neighbours. If the model is correct, a consistency relation can fail only by as much as the noise explains. At the two-ended design above, no consistency relation involves curvature, and that is why the design cannot test it.

For GPs built from finitely many features, the consistency relations are exactly the null space of the kernel matrix, the combinations of function values that the prior fixes at zero. We describe these relations geometrically with Gale duality \cite{gale1956,ziegler1995,deloera2010}. It gives each observation a vector, its \emph{Gale vector}, that describes its role in the relations, and it singles out the \emph{circuits}, the smallest groups of observations that can expose a structural error. For other kernels, such as the radial basis function (RBF) kernel, no exact relations exist. Some patterns are then merely unlikely, and we use the spectrum of the kernel matrix to describe these soft restrictions. Gale duality is the language for the exact case, not the source of the algebra.

The relations span the same residual space as classical diagnostics. Our interest is therefore not in constructing another retrospective residual test, but in using that space prospectively. Before observing $y$, the model and the chosen inputs already determine which departures can leave a residual and which can be absorbed by the model. This makes testability a property of the design itself.

For a finite-rank GP, the exact relations are the null space of the kernel matrix. Gale duality provides a geometric representation of these relations and identifies their minimal supports through circuits. For a full-rank GP, no exact relations remain, but the kernel spectrum still separates strongly penalised response directions from flexible ones. The corresponding power calculation distinguishes evidence from a violated relation from evidence that is due only to prior improbability.

The residual projector, multiple-case diagnostic identities, noncentral chi-square power calculations and classical discrimination criteria used below are not new. The contribution is the way they are put together to answer a design question: what could this GP design actually contradict, and what kind of evidence would a rejection represent? Because this question can be answered before the responses are measured, it also gives a prospective criterion for choosing observations. The analysis assumes a fixed mean, kernel and noise variance; estimation from the same data and repeated testing are treated as limitations rather than folded into the exact theory.

The same distinction is already latent in the marginal likelihood. If $K=\sum_j\lambda_ju_ju_j^\top$, then
$y^\top(K+\sigma^2I)^{-1}y=\sum_j(u_j^\top y)^2/(\lambda_j+\sigma^2)$.
Our use of this decomposition is to attach those directions to exact or soft relations of the design and to compute, before measurement, the power against a specified departure. The experiments then ask whether that prospective calculation agrees with simulation and whether it selects different observations from predictive variance.
\section{Related work}\label{sec:related}

Influence and outlier diagnostics for linear models have a long history, including Cook's distance \cite{cook1977}, the systematic treatments of \cite{belsley1980,cook1982}, and robust alternatives \cite{rousseeuw1987,atkinson2000}. A well-known difficulty is \emph{masking}: several outliers can conceal one another so that each appears innocuous when assessed individually \cite{beckman1983}. Group and subset-based procedures were developed precisely for this situation \cite{hawkins1984,hadi1993}. Our group tests are GP analogues of these multiple-case diagnostics.

Closed-form leave-one-out (LOO) residuals for kriging and GP regression follow from the inverse covariance matrix \cite{dubrule1983,sundararajan2001,rasmussen2006}. Multiple-fold residuals and their covariances admit analogous block formulae \cite{ginsbourger2025}. Diagnostics for GP emulators of computer models, including Mahalanobis-type statistics and pivoted Cholesky errors, were systematised in \cite{bastos2009}. The relative behaviour of maximum likelihood and cross-validation for hyperparameter estimation under misspecification is studied in \cite{bachoc2013}. Approximate LOO for non-Gaussian likelihoods is reviewed in \cite{vehtari2016}. The group tests of Section~\ref{sec:tests} are of this classical type. Here they are used prospectively: their power is computed from the design and interpreted through the model's consistency relations.

Instead of testing a fixed observation model, one can make it more flexible: Student-$t$ likelihoods \cite{jylanki2011}, input-dependent noise \cite{goldberg1998,kersting2007}, or data-point-specific noise variances selected by sequential relevance pursuit \cite{ament2024}. These approaches repair the model rather than diagnose it, and are natural alternatives once a test has identified an incompatibility.

In the calibration of computer models, systematic mismatch between a model and reality is represented by an explicit discrepancy term \cite{kennedy2001}, and ignoring it can lead to biased and overconfident inference \cite{brynjarsdottir2014}. In Bayesian inference, model criticism through predictive checks goes back to \cite{box1980} and was developed further in \cite{gelman1996}. Our tests are frequentist checks of the hypothesis that the data come from a fully specified GP (Section~\ref{sec:tests}), but they share the aim of separating criticism from estimation.

Designs that detect the inadequacy of a regression model go back to the treatment of bias from omitted terms in \cite{box1959}, and were developed into explicit design criteria in \cite{atkinson1972,jones1978}. Choosing experiments to discriminate between rival models has a similarly long history, from the entropy-based sequential designs of \cite{box1967} and the T-optimality criterion of \cite{atkinson1975} to its later analysis and extensions \cite{dette2009,lopezfidalgo2007,atkinson2007}. In machine learning, active model selection chooses observations to discriminate between GP models \cite{gardner2015}, and GP surrogates make discrimination designs available for non-analytical models \cite{olofsson2018}. Section~\ref{sec:discrimination} positions our power results relative to this literature.

Gale duality originates in the study of convex polytopes \cite{gale1956} and is treated in standard texts \cite{grunbaum2003,ziegler1995,matousek2002,deloera2010}. Algebraic statistics describes the identifiable polynomial models of a design through its design ideal \cite{pistone1996,pistone2001}, and circuits have been used to generate randomisation schemes \cite{pesce2023}. Classical geometric structure has been combined with GP models before, for example line geometry for surface approximation with GP latent variable models \cite{deboi2023}. Here Gale duality serves as the language that makes the relations implied by a GP design, and their effect on the power of its tests, explicit.

\section{Gale duality and model consistency relations}\label{sec:fit}
Consider first a linear regression model with a finite set of features. Its feature matrix spans the response vectors that the model can represent exactly. The orthogonal complement contains the linear relations that every noiseless response must satisfy. Gale duality organises those relations observation by observation and identifies their smallest supports.

\subsection{Gale duality}
We begin with the classical construction. Let $p_1,\ldots,p_n\in\R^d$ be $n$ points whose affine hull is all of $\R^d$. An \emph{affine dependence} of the configuration is a vector $\gamma\in\R^n$ with
\begin{equation}\label{eq:affdep}
 \sum_{i=1}^n\gamma_i=0,\qquad\sum_{i=1}^n\gamma_ip_i=0.
\end{equation}
Informally, an affine dependence is a way of weighting the points so that both the weights and the weighted positions cancel. The affine dependences form a linear subspace of $\R^n$ of dimension $n-d-1$. They matter for data because an affine dependence annihilates the values of every affine function. If $y_i=a+b^\top p_i$, then $\sum_i\gamma_iy_i=a\sum_i\gamma_i+b^\top\sum_i\gamma_ip_i=0$ by \eqref{eq:affdep}. Example~\ref{ex:three} works this out for three points on a line.

Choose a basis of this subspace and write it as the columns of a matrix $G\in\R^{n\times(n-d-1)}$. The $n$ rows $g_1^\top,\ldots,g_n^\top$ of $G$, with $g_i\in\R^{n-d-1}$, form the \emph{Gale transform} of the configuration: a second configuration of $n$ vectors, one per original point, in a space of complementary dimension \cite{gale1956,grunbaum2003,ziegler1995,matousek2002,thomas2006}. The primal configuration records where the points are. The dual configuration records how they depend on each other. The power of the construction is that properties of the one can be read off from the other. In this paper the property read off is which sets of observations witness which relations (Proposition~\ref{prop:circuits}).

The same construction works far beyond points in space. Conditions \eqref{eq:affdep} say that $\gamma$ is orthogonal to every column of the $n\times(d+1)$ matrix with rows $(1,p_i^\top)$, the vectors that describe the points. Replacing these rows by any feature vectors gives the version we use throughout.

We write $\mathcal X$ for the input space and $x_1,\ldots,x_n\in\mathcal X$ for the $n$ inputs, which together form the \emph{design}. A feature map $\phi:\mathcal X\to\R^m$ turns each input into $m$ features, and the feature matrix $\Phi\in\R^{n\times m}$ has one row $\phi(x_i)^\top$ per observation. Its rank is $r=\operatorname{rank}(\Phi)$. The relations live in $\nul(\Phi^\top)=\{v\in\R^n:\Phi^\top v=0\}$, a subspace of dimension $n-r$.
\begin{definition}[Gale representation of a feature configuration]\label{def:gale}
A matrix $G\in\R^{n\times(n-r)}$ whose columns form a basis of $\nul(\Phi^\top)$ is a Gale representation of the configuration of rows of $\Phi$. Its $i$th row $g_i^\top$, with $g_i\in\R^{n-r}$, is the Gale vector of observation $i$.
\end{definition}
For the affine feature map $\phi(x)=(1,x^\top)^\top$ with $\mathcal X=\R^d$, Definition~\ref{def:gale} recovers the classical Gale transform.
\begin{example}[Three points on a line]\label{ex:three}
Take the three points $0,1,2$ of the real line ($n=3$, $d=1$) with the affine feature map, so that
\begin{equation*}
 \Phi=\begin{pmatrix}1&0\\1&1\\1&2\end{pmatrix},\qquad
 G=\frac1{\sqrt6}\begin{pmatrix}1\\-2\\1\end{pmatrix},
\end{equation*}
with $r=2$ and $n-r=1$. One checks that $\Phi^\top G=0$. Every affine dependence is a multiple of $(1,-2,1)$, since $1-2+1=0$ and $1\cdot0-2\cdot1+1\cdot2=0$, and this is the straight-line relation of the introduction. The values $1,3,5$ that a straight line takes at the three points satisfy $1-2\cdot3+5=0$, whereas the values $1,4,5$, which no straight line passes through, give $1-2\cdot4+5=-2$. The Gale vectors are the numbers $1/\sqrt6$, $-2/\sqrt6$ and $1/\sqrt6$: the middle point plays the opposite role to its two neighbours.
\end{example}
A Gale representation is not unique. Another basis of $\nul(\Phi^\top)$ replaces $G$ by $GM$ for an invertible $M\in\R^{(n-r)\times(n-r)}$, which changes the Gale vectors but not the relations they encode. Unless stated otherwise we therefore take $G$ with orthonormal columns, $G^\top G=I_{n-r}$, where $I_k$ is the $k\times k$ identity matrix and $I$ without subscript is $n\times n$. Only rotations then remain free, so the lengths of the Gale vectors and the angles between them are meaningful.

Classically, the information of interest is combinatorial and is read off from the signs of the Gale vectors. In this paper we mainly use their lengths and inner products, because these determine the leverage of an observation (Section~\ref{sec:leverage}) and the power of the tests (Section~\ref{sec:power}).

\subsection{Consistency relations of a model}
Now attach a response $y_i\in\R$ to each input, and collect them in $y\in\R^n$. Consider the linear-in-parameters model
\begin{equation}\label{eq:linear}
 y=\Phi w+\varepsilon,\qquad \varepsilon\sim\mathcal N(0,\sigma^2I),
\end{equation}
with parameter vector $w\in\R^m$ and observation-noise variance $\sigma^2>0$. The column space $\col(\Phi)\subseteq\R^n$ contains every noiseless response vector that the model can represent. Each column $\gamma\in\R^n$ of a Gale representation satisfies $\gamma^\top\Phi=0$, and hence $\gamma^\top\Phi w=0$ for every $w$. Every exactly representable response therefore satisfies $\gamma^\top y=0$. We call such a linear constraint a \emph{consistency relation} of the model at the design. The vector
\begin{equation}
 G^\top y=\sum_{i=1}^n y_i\,g_i\in\R^{n-r}
\end{equation}
collects the values of a complete, non-redundant set of consistency relations. It vanishes for every response the model can represent exactly, and its departures from zero are attributable either to noise or to a violation of the model. In this picture, observation $i$ contributes its Gale vector $g_i$, weighted by its response, to the total failure of the relations. The Gale vectors are fixed by the model and the design before any response is observed. They determine how a departure at a given observation can show up in the tests.

\subsection{Residual projector, leverage and noise}\label{sec:leverage}
The Gale representation is closely tied to classical regression diagnostics. When $\Phi$ has full column rank, the hat matrix $H=\Phi(\Phi^\top\Phi)^{-1}\Phi^\top\in\R^{n\times n}$ projects orthogonally onto $\col(\Phi)$, and the residual projector is $P_N=I-H$ \cite{cook1982}. To avoid a full-rank assumption, for example when there are more features than distinct inputs, we write more generally
\begin{equation}\label{eq:projector}
 P_N=I-\Phi\Phi^+=GG^\top\in\R^{n\times n},
\end{equation}
where $\Phi^+\in\R^{m\times n}$ is the Moore--Penrose pseudo-inverse. The matrix $\Phi\Phi^+$ is the orthogonal projector onto $\col(\Phi)$ for any $\Phi$, and it coincides with $H$ in the full-rank case. The second equality holds because both sides are the orthogonal projector onto $\nul(\Phi^\top)$. The vector $P_Ny=G(G^\top y)$ is the ordinary residual vector, so the Gale coordinates $G^\top y$ are the residuals expressed in an orthonormal basis of the residual space. Under \eqref{eq:linear},
\begin{equation}\label{eq:chi2basic}
 G^\top y=G^\top\varepsilon\sim\mathcal N(0,\sigma^2I_{n-r}),\qquad
 \frac{\|G^\top y\|^2}{\sigma^2}=\frac{\|P_Ny\|^2}{\sigma^2}\sim\chi^2_{n-r}.
\end{equation}
This is the familiar distribution of the residual sum of squares. The distributional statement requires the model and noise assumptions, not merely the existence of a null space: the null space tells us \emph{which} relations to examine, and the noise model tells us \emph{how large} their failures may be by chance.

The diagonal of \eqref{eq:projector} gives
\begin{equation}\label{eq:leverage}
 \|g_i\|^2=(P_N)_{ii}=1-h_{ii},
\end{equation}
where $h_{ii}=H_{ii}$ is the leverage of observation $i$ \cite{belsley1980}, and the off-diagonal entries give $g_i^\top g_j=(P_N)_{ij}$. A point with large leverage has a short Gale vector and hence a small footprint in the residual space: the fitted model can absorb most of a perturbation at that point by adjusting its coefficients. Conversely, a long Gale vector means that a perturbation at that point is largely visible in the residuals. Leverage is thus a statement about the \emph{observability} of a local departure, and this is how we shall use it. Example~\ref{ex:square} illustrates these quantities.

\begin{example}[A planar model on the unit square]\label{ex:square}
Take the four corners $(0,0),(1,0),(1,1),(0,1)$ of the unit square, numbered $1$ to $4$, and the centre $(1/2,1/2)$, numbered $5$, with the planar model $\phi(x)=(1,x_1,x_2)$. Here $n=5$, $m=r=3$, and $\Phi\in\R^{5\times3}$, so there are $n-r=2$ independent consistency relations. A convenient basis is
\begin{equation}
 y_1-y_2+y_3-y_4=0,\qquad
 y_1+y_2+y_3+y_4-4y_5=0.
\end{equation}
The first states that a plane has zero mixed difference, or `twist', over the square: the change in one coordinate direction is the same on both sides of the square. The second states that the value of a plane at the centre is the average of its values at the corners. The two coefficient vectors are orthogonal, and normalising them gives $G\in\R^{5\times2}$ with Gale vectors
\begin{equation}
 g_1=g_3=\bigl(\tfrac12,\tfrac1{\sqrt{20}}\bigr),\quad
 g_2=g_4=\bigl(-\tfrac12,\tfrac1{\sqrt{20}}\bigr),\quad
 g_5=\bigl(0,-\tfrac4{\sqrt{20}}\bigr),
\end{equation}
shown in Figure~\ref{fig:square}. Hence $(P_N)_{ii}=0.3$ for each corner and $(P_N)_{55}=0.8$, that is, leverages $0.7$ and $0.2$. A perturbation at the centre violates only the second relation, and $80\%$ of its squared size appears in the residuals. A perturbation at a corner violates both relations, but only $30\%$ of its squared size remains visible, because a tilted plane can absorb most of it. Opposite corners have identical Gale vectors, so no test can tell an offset at one from an offset at the other. The Gale vectors sum to zero only because the model contains an intercept.
\end{example}

\begin{figure}[!htbp]
\centering
\includegraphics[width=0.9\linewidth]{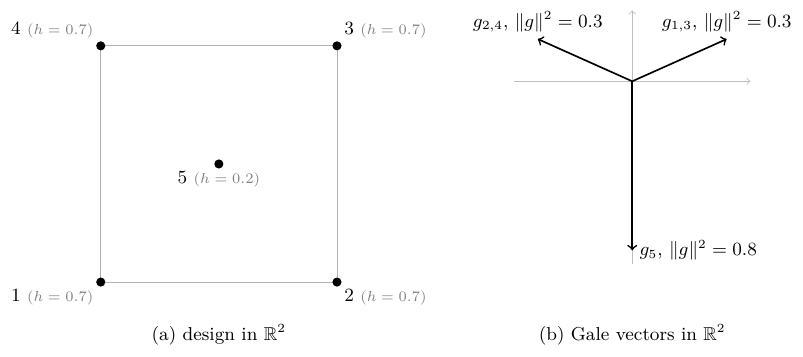}
\caption{Example~\ref{ex:square}. (a) The design, five inputs in $\R^2$, with the leverage $h_{ii}$ of each observation under the planar model. (b) The corresponding Gale vectors $g_i\in\R^{n-r}=\R^2$. Opposite corners share a Gale vector. The centre has the longest Gale vector and the smallest leverage, so departures there are the most visible in the residuals.}\label{fig:square}
\end{figure}

Among all consistency relations, those with minimal support play a special role. A non-zero $\gamma\in\nul(\Phi^\top)$ is a \emph{circuit} of the configuration if no other non-zero element of $\nul(\Phi^\top)$ has a support strictly contained in that of $\gamma$ \cite{ziegler1995,bjorner1999}. Circuits are the most local tests a design offers. Each involves as few observations as possible, so its failure involves only those observations, although it does not identify which of them is at fault. The relation $(1,-2,1)$ of Example~\ref{ex:three} is a circuit. Up to scaling, Example~\ref{ex:square} has three circuits: the twist relation and the two diagonal relations $y_1+y_3-2y_5=0$ and $y_2+y_4-2y_5=0$, which state that the value at the centre is the average of each pair of opposite corners. The centre relation of Example~\ref{ex:square} is the sum of the two diagonal relations and is therefore not a circuit. Proposition~\ref{prop:circuits} collects standard facts of matroid theory \cite{bjorner1999}, stated in the form we use for experimental design.
\begin{proposition}[Circuits are minimal witnesses]\label{prop:circuits}
For a set $C\subseteq\{1,\ldots,n\}$ of observations, write $\Phi_C$ for the rows of $\Phi$ indexed by $C$.
\begin{enumerate}
\item[(i)] $C$ supports a non-zero consistency relation exactly when the rows of $\Phi_C$ are linearly dependent, that is, when $\operatorname{rank}(\Phi_C)<|C|$.
\item[(ii)] The supports of the circuits are the minimal sets $C$ with this property.
\item[(iii)] The circuits span $\nul(\Phi^\top)$. A departure $\delta$ therefore violates some consistency relation, $G^\top\delta\neq0$, exactly when it violates some circuit.
\end{enumerate}
\end{proposition}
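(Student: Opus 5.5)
The plan is to reduce everything to the identity $\gamma^\top\Phi=\sum_{i\in\operatorname{supp}\gamma}\gamma_i\phi(x_i)^\top$. A vector $\gamma$ with support inside $C$ lies in $\nul(\Phi^\top)$ exactly when its restriction $\gamma_C$ satisfies $\Phi_C^\top\gamma_C=0$.

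For (i), we say $C$ supports a relation if some non-zero $\gamma\in\nul(\Phi^\top)$ has $\operatorname{supp}\gamma\subseteq C$. Such a $\gamma$ exists exactly when $\nul(\Phi_C^\top)\neq\{0\}$: extend a non-zero element of $\nul(\Phi_C^\top)$ by zeros outside $C$, and conversely restrict $\gamma$ to $C$. This is precisely linear dependence of the rows of $\Phi_C$, i.e.\ $\operatorname{rank}(\Phi_C)<|C|$.

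For (ii), call a set dependent if it has the property in (i); it is monotone under enlarging $C$. First, if $\gamma$ is a circuit with support $S$, then $S$ is dependent. If some $S'\subsetneq S$ were dependent, (i) would give a non-zero relation supported in $S'$, hence with support strictly inside $S$, contradicting minimality. So $S$ is a minimal dependent set. Conversely, let $C$ be a minimal dependent set and take a non-zero relation $\gamma$ supported in $C$. Its support is dependent and contained in $C$, so by minimality it equals $C$. If another non-zero relation had support strictly inside $C$, that support would be a smaller dependent set, which is impossible. Hence $\gamma$ is a circuit with support $C$. As a by-product, the relation on a minimal dependent set is unique up to scaling: two non-proportional ones could be combined to cancel a coordinate, giving a smaller support.

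For (iii), the main work is showing that circuits span. I would prove that every non-zero $\gamma\in\nul(\Phi^\top)$ is a linear combination of circuits, by induction on $|\operatorname{supp}\gamma|$. If $\gamma$ is not a circuit, there is a non-zero relation with strictly smaller support. Taking a minimal one gives a circuit $c$ with $\operatorname{supp}c\subsetneq\operatorname{supp}\gamma$. Choose $i\in\operatorname{supp}c$ and set $\gamma'=\gamma-(\gamma_i/c_i)c$. Then $\gamma'$ lies in the null space and its support is contained in $\operatorname{supp}\gamma\setminus\{i\}$, so the induction hypothesis applies to $\gamma'$ (trivially if $\gamma'=0$). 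The only care needed is that the smaller support really is strict and that a minimal relation below $\gamma$ exists; the latter holds because supports are finite.

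For the final claim, the columns of $G$ form a basis of $\nul(\Phi^\top)$. Therefore $G^\top\delta\neq0$ exactly when $\gamma^\top\delta\neq0$ for some $\gamma$ in the null space. Since the circuits span that space, $\gamma^\top\delta\neq0$ for some $\gamma$ holds exactly when $c^\top\delta\neq0$ for some circuit $c$.
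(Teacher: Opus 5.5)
Your proposal is correct and takes essentially the same route as the paper: (i) by extending and restricting relations to $C$, and (iii) by the same induction on support size, subtracting a multiple of a circuit to cancel one coordinate. In (ii) you also prove both directions of the correspondence between circuit supports and minimal dependent sets, which the paper simply calls ``the definition of a circuit''; this adds rigour but is not a different approach.
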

\begin{proof}
A relation supported on $C$ is a non-zero $\gamma$ with $\gamma_i=0$ for $i\notin C$ and $\gamma_C^\top\Phi_C=0$, which exists exactly when the rows of $\Phi_C$ are dependent. Part (ii) is the definition of a circuit. For (iii), let $\gamma\neq0$ be a relation. Among the relations whose support lies in that of $\gamma$, one of minimal support is a circuit $\gamma'$. For a coordinate $i$ in the support of $\gamma'$, the relation $\gamma-(\gamma_i/\gamma'_i)\gamma'$ has a strictly smaller support than $\gamma$, and induction on the size of the support writes $\gamma$ as a combination of circuits. If every circuit annihilated $\delta$, so would every relation.
\end{proof}
A circuit support is thus the smallest set of measurements on which the model imposes a non-trivial exact constraint. Every proper subset of it permits arbitrary noiseless responses, whereas the responses on the full circuit must satisfy one consistency relation. By (iii), every structural contradiction that a design can produce is witnessed by such a minimal set. In Example~\ref{ex:square} the twist is witnessed by the four corners alone, and the diagonal relations by three observations each. For polynomials of degree $q$ in one dimension, any $q+2$ distinct inputs form a circuit, which is the minimal design of Example~\ref{ex:veronese}. This is the role of Gale duality in this paper: it organises the exact relations through their minimal supports, making the local structure of structural testability explicit.

\subsection{Structural assumptions from feature maps}
Definition~\ref{def:gale} applies to any feature map, and the feature map determines which structural assumption the relations encode. For polynomial models the relevant feature map is the Veronese map. In one dimension a polynomial of degree at most $q$ is a linear function of
\begin{equation}\label{eq:veronese}
 \phi_q(x)=(1,x,x^2,\ldots,x^q)^\top\in\R^{q+1},
\end{equation}
and in $d$ dimensions of the vector $\phi_q(x)$ of all $\binom{d+q}{q}$ monomials of degree at most $q$, which is the affine form of the Veronese map of algebraic geometry \cite{michalek2021}. The Gale representation of the configuration $\phi_q(x_1),\ldots,\phi_q(x_n)$ therefore contains exactly the linear relations that every polynomial of degree at most $q$ satisfies at the design. A departure of higher degree is structurally visible exactly when it violates one of them, that is, when it is not representable by a polynomial of degree $q$ at the design. Example~\ref{ex:veronese} works this out for the question that runs through the paper.

\begin{example}[A straight line against curvature]\label{ex:veronese}
Take the straight-line model, $\phi_1(x)=(1,x)^\top$, and a curvature $\delta_i=x_i^2$. Suppose that the design has $k$ distinct inputs, each possibly repeated. Then $r=\operatorname{rank}(\Phi)=\min(k,2)$, and there are $n-r$ consistency relations.
\begin{itemize}
\item \emph{Two distinct inputs $a\neq b$.} Every relation compares replicates, $y_i-y_j=0$ for $x_i=x_j$. Any function takes equal values at equal inputs, so no relation involves curvature. Indeed, the vector $(x_i^2)$ equals the straight line $a^2+(a+b)(x-a)$ at both inputs, so it lies in $\col(\Phi)$ and $G^\top\delta=0$. No number of observations at the two inputs can reveal curvature.
\item \emph{Three distinct inputs $a<b<c$.} Besides the replicate comparisons there is one relation between the responses at the three inputs, $(c-b)\,y(a)-(c-a)\,y(b)+(b-a)\,y(c)=0$, which for $a,b,c=0,1,2$ is the relation $(1,-2,1)$ of Example~\ref{ex:three}. On the curvature it takes the value $(c-b)a^2-(c-a)b^2+(b-a)c^2=(b-a)(c-b)(c-a)>0$, so curvature always violates it, and more strongly the more the inputs are spread out.
\end{itemize}
The same argument settles degree $q$ against degree $q+1$. If the design has at most $q+1$ distinct inputs, a polynomial of degree $q$ can interpolate $x^{q+1}$ at all of them, so the departure violates no relation. If it has $q+2$ distinct inputs, no such polynomial exists, since $x^{q+1}-p(x)$ would then be a non-zero polynomial of degree $q+1$ with $q+2$ roots. A design can therefore test degree $q$ against degree $q+1$ exactly when it has at least $q+2$ distinct inputs. In several dimensions the number of distinct inputs is no longer enough, and their position matters. The twist $x_1x_2$, for example, cannot be tested against a plane on any design whose inputs all lie on one line parallel to an axis, since there it equals an affine function. Which departures a design can test is then a question about the Veronese configuration of the inputs, and Section~\ref{sec:structured} answers it with the Gale representation. Which polynomial terms a design can identify, and which are aliased with one another, is the subject of the algebraic theory of designs, in which the design is described by a polynomial ideal \cite{pistone1996,pistone2001}. This example is the simplest case of that theory. Our use of it is to carry the relations over to GP kernels and to their power.
\end{example}

Other structural hypotheses imply relations of the same kind.
\begin{example}[Periodicity and additivity]\label{ex:structures}
A model with known period $\tau$ implies $f(x)-f(x+\tau)=0$ for the noiseless values whenever both inputs are observed. An additive model $f(x_1,x_2)=f_1(x_1)+f_2(x_2)$ implies, on any $2\times2$ grid of inputs $\{a,a'\}\times\{b,b'\}$, the mixed difference
\begin{equation}\label{eq:mixed}
 f(a,b)-f(a,b')-f(a',b)+f(a',b')=0,
\end{equation}
and the twist relation of Example~\ref{ex:square} is the special case in which both $f_1$ and $f_2$ are linear.
\end{example}
Changing the feature map therefore changes the assumptions being tested. A polynomial lift is appropriate for a polynomial hypothesis. It does not automatically describe the assumptions of a GP with a radial basis function (RBF) kernel. For this reason, Section~\ref{sec:gp} constructs the relations from the kernel itself.

\section{Consistency tests for Gaussian processes}\label{sec:gp}
The same construction can be read directly from a GP kernel. For a finite-rank kernel, the null space of the kernel matrix is exactly the space of consistency relations. A strictly positive-definite kernel has no non-trivial null space at distinct inputs; in that case the relevant information is spectral rather than algebraic.

A Gaussian process (GP) is a probability distribution over functions $f:\mathcal X\to\R$ such that, for any finite set of inputs, the vector of function values is jointly Gaussian \cite{rasmussen2006}. A zero-mean GP is determined by its covariance function, or kernel, $k(x,x')=\operatorname{Cov}(f(x),f(x'))$, and we write $f\sim\mathcal{GP}(0,k)$. The linear model of Section~\ref{sec:fit} is a special case. If the parameters in \eqref{eq:linear} receive a prior $w\sim\mathcal N(0,\Sigma_w)$, with a positive definite $\Sigma_w\in\R^{m\times m}$, then $f(x)=\phi(x)^\top w$ is a (degenerate) GP with kernel $k(x,x')=\phi(x)^\top\Sigma_w\phi(x')$. This kernel has finite rank, because it is built from $m$ features. Kernels such as the RBF kernel $k(x,x')=s\exp(-\|x-x'\|^2/2\ell^2)$, with lengthscale $\ell>0$ and output scale $s>0$, correspond to infinitely many features and encode smoothness rather than a finite-dimensional functional form \cite{rasmussen2006}. Conditioning a GP on noisy observations gives a posterior mean and variance at every input in closed form, which is what makes GPs attractive as surrogate models. We now ask: which consistency relations does such a model imply at a design?

Model the responses as $y_i=f(x_i)+\varepsilon_i$, with a latent function $f\sim\mathcal{GP}(0,k)$ with kernel $k:\mathcal X\times\mathcal X\to\R$, and independent noise $\varepsilon_i\sim\mathcal N(0,\sigma^2)$. We work with a fixed zero mean, or with responses centred by a specified mean function. Let $K\in\R^{n\times n}$ be the positive-semidefinite kernel matrix, $K_{ij}=k(x_i,x_j)$. Two further matrices appear throughout,
\begin{equation}\label{eq:gpnotation}
 A=K+\sigma^2I,\qquad Q=A^{-1},
\end{equation}
both in $\R^{n\times n}$. The matrix $A$ is the covariance of the responses $y$, and $Q$ is its inverse, the precision matrix. Both depend on $\sigma^2$, and we write $A(\sigma^2)$ or $Q(\sigma^2)$ only where that matters.

Unless stated otherwise, the kernel hyperparameters and the mean are fixed. Fitting them to the same observations changes the calibration of the tests, a point we return to in Section~\ref{sec:tests}.

\subsection{Finite-rank kernels: exact relations}

For a finite-rank kernel $K=\Phi\Phi^\top$ with $\Phi\in\R^{n\times m}$,
\begin{equation}
\nul(K)=\nul(\Phi^\top),
\label{eq:null-kernel-feature}
\end{equation}
since for any $v\in\R^n$,
\[
v^\top Kv=\|\Phi^\top v\|^2.
\]
The same identity holds for $K=\Phi\Sigma_w\Phi^\top$ whenever
$\Sigma_w\in\R^{m\times m}$ is positive definite, as in the GP obtained
from the Bayesian linear model \eqref{eq:linear} with prior
$w\sim\mathcal N(0,\Sigma_w)$. The exact consistency relations can
therefore be recovered directly from $K$, even when the feature map
$\Phi$ is unknown or has no convenient form.

The null space of $K$ also has a direct reading in terms of the prior. Write $f\in\R^n$ for the vector of latent values $f(x_i)$, so that $y=f+\varepsilon$. For any $v\in\R^n$, $\Var(v^\top f)=v^\top Kv$, which vanishes exactly when $Kv=0$. The null space of $K$ is therefore the set of linear combinations of function values that the prior fixes at zero with certainty, before any data are seen. For such $v$, $v^\top y=v^\top f+v^\top\varepsilon=v^\top\varepsilon$, so any systematic signal along these directions contradicts the model. The dimension $n-\operatorname{rank}(K)$ counts the (linearly) independent checks that the data can run against the prior at this design. When it is zero, the data can challenge the model only in the soft sense of Section~\ref{sec:soft}. A relation supported on a set $C$ of observations exists exactly when $v_C^\top K_{CC}v_C=0$ for some $v_C\neq0$, that is, when the kernel matrix $K_{CC}$ of those inputs is singular, so the circuits of Proposition~\ref{prop:circuits} can also be found from $K$ alone.

\begin{example}[Polynomial kernel]\label{ex:polykernel}
The polynomial kernel $k(x,x')=(1+xx')^q$ on $\R$ is the kernel of a random polynomial of degree $q$. Indeed, it factorises as $K=\Phi\Sigma_w\Phi^\top$ with $\phi=\phi_q$ and a diagonal positive definite $\Sigma_w\in\R^{(q+1)\times(q+1)}$ of binomial coefficients. It therefore has rank at most $q+1$ on any design. At $q+2$ distinct inputs its null space is spanned by a single relation, the divided difference of order $q+1$, which for $q=1$ is the relation of Example~\ref{ex:veronese}. The kernel matrix alone thus recovers the test of polynomial degree, without any feature map.

For $q=1$, $k(x,x')=1+xx'$, and at the inputs $0,1,2$ the kernel matrix is
\begin{equation}
 K=\begin{pmatrix}1&1&1\\1&2&3\\1&3&5\end{pmatrix}.
\end{equation}
One checks directly that $K(1,-2,1)^\top=0$. The prior therefore insists that $f(0)-2f(1)+f(2)=0$: the middle value is the average of the outer ones. With outer observations $1$ and $5$ the model expects a value of about $3$ in the middle. An observed $4$ gives $y_0-2y_1+y_2=-2$. Under the model this combination has variance $(1,-2,1)A(1,-2,1)^\top=6\sigma^2$, so its size relative to $\sqrt6\,\sigma$ decides whether it is plausibly noise or a failure of the straight-line assumption. The null space supplies the relation, while the data and the noise model supply the verdict.
\end{example}

For a strictly positive-definite kernel, such as the RBF or Mat\'ern kernels, the kernel matrix at distinct inputs is nonsingular and the exact null space is trivial. Every response vector is then representable. This does not mean that every response is equally plausible under the prior, and the next subsection makes this precise. Repeated inputs are the exception for every kernel. If $x_i=x_j$, rows $i$ and $j$ of $K$ coincide, so the vector with entries $+1$ at $i$, $-1$ at $j$ and $0$ elsewhere lies in $\nul(K)$. The relation $y_i-y_j=0$ then tests the noise model alone, which is the classical pure-error check for replicated observations. Inputs that are close but not equal give small eigenvalues in \eqref{eq:spectrum} instead of zero ones.

Adding kernels keeps only the relations that both kernels share. For positive-semidefinite $K_1,K_2\in\R^{n\times n}$, $v^\top(K_1+K_2)v=0$ exactly when $v^\top K_1v=v^\top K_2v=0$, so $\nul(K_1+K_2)=\nul(K_1)\cap\nul(K_2)$. This can still leave exact relations even when both kernels are flexible. For an additive kernel $k(x,x')=k_1(x_1,x_1')+k_2(x_2,x_2')$, with each $k_j$ depending on the $j$th input only, the mixed difference \eqref{eq:mixed} has zero variance under both components. It is therefore an exact relation even when $k_1$ and $k_2$ are RBF kernels, and an $n_1\times n_2$ grid of inputs carries $(n_1-1)(n_2-1)$ independent relations of this kind. Experiment E3 uses this.

\subsection{Full-rank kernels: spectral coordinates}\label{sec:soft}
Exact Gale vectors exist only when $K$ is singular. For a full-rank kernel we use instead the spectral coordinates of a residual matrix. They play the same role for lengths and inner products, but they are not an exact Gale transform, and the combinatorial reading of Section~\ref{sec:fit}, through signs and circuits, does not carry over to them.

The posterior mean of $f$ at the training inputs is $KQy$. Since $KQ=(A-\sigma^2I)Q=I-\sigma^2Q$, the corresponding residual is
\begin{equation}\label{eq:soft}
 e=y-KQy=Sy\in\R^n,\qquad S=\sigma^2Q\in\R^{n\times n}.
\end{equation}
We call $e$ the \emph{soft residual} and $S$ the \emph{residual matrix}. Let $K=U\Lambda U^\top$ be an eigendecomposition, with $U\in\R^{n\times n}$ orthogonal, columns $u_1,\ldots,u_n$, and $\Lambda=\operatorname{diag}(\lambda_1,\ldots,\lambda_n)$, $\lambda_j\geq0$. Then
\begin{equation}\label{eq:spectrum}
 S=U\operatorname{diag}\left(\frac{\sigma^2}{\lambda_j+\sigma^2}\right)U^\top.
\end{equation}
The residual matrix acts by spectral shrinkage: $Sy$ projects $y$ onto the eigenvectors of $K$, damps the coordinate along $u_j$ by the factor $\sigma^2/(\lambda_j+\sigma^2)$, and maps the result back. The spectrum therefore sorts the directions of response space into three kinds, and we use the following vocabulary throughout.
\begin{itemize}
\item $\lambda_j=0$: a \emph{hard structural relation}. The prior fixes $u_j^\top f$ at zero, and all of $u_j^\top y$ passes to the residual.
\item $0<\lambda_j\ll\sigma^2$: a \emph{soft prior restriction}. The prior lets $u_j^\top f$ vary, but by much less than the noise, so almost all of $u_j^\top y$ passes to the residual. This is a statement about plausibility, not an algebraic constraint.
\item $\lambda_j\gg\sigma^2$: a \emph{prior-flexible direction}. The residual keeps only a small fraction of $u_j^\top y$, because the model readily explains such patterns.
\end{itemize}
Figure~\ref{fig:spectrum} shows that for an RBF kernel the soft prior restrictions oscillate between neighbouring inputs and resemble higher-order finite differences, but unlike the relations of Section~\ref{sec:fit} they exclude no response pattern.

\begin{figure}[!htbp]
\centering
\includegraphics[width=\linewidth]{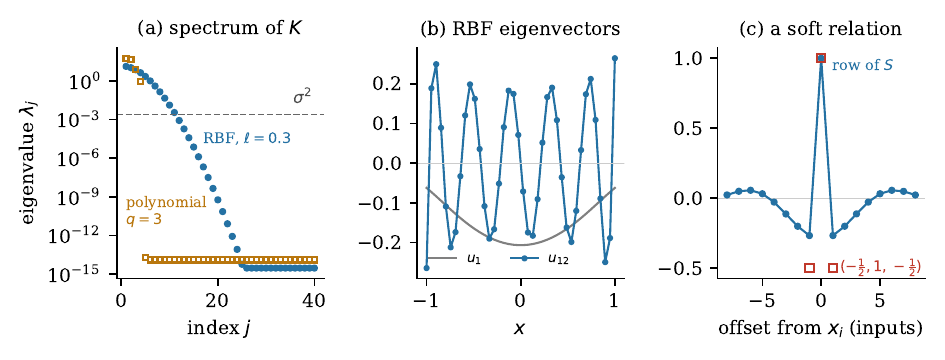}
\caption{Hard structural relations and soft prior restrictions on $n=40$ equally spaced inputs in $[-1,1]$, with noise variance $\sigma^2=2.5\times10^{-3}$. (a) Eigenvalues $\lambda_j$ of the kernel matrix for the RBF kernel with lengthscale $\ell=0.3$ and unit output scale, and for the polynomial kernel $(1+xx')^3$. Values below machine precision relative to $\lambda_1$ are shown at that level. The polynomial kernel has rank $4$, so it imposes $36$ exact relations. The RBF kernel has full rank, and the $29$ eigendirections below $\sigma^2$ are soft prior restrictions. (b) The leading eigenvector $u_1$ of the RBF kernel matrix and the first eigenvector $u_{12}$ whose eigenvalue lies below $\sigma^2$. (c) The row of the residual matrix $S$ at the central input, scaled to $1$ at that input, next to the straight-line relation $(-\frac12,1,-\frac12)$ of Example~\ref{ex:veronese}. It compares $y_i$ with a weighted average of its neighbours.}\label{fig:spectrum}
\end{figure}

Unlike the exact projector $P_N$, the residual matrix $S$ is generally not idempotent, and $\|Sy\|^2/\sigma^2$ does not in general have a chi-square distribution. Since $S$ is symmetric and positive definite, it has the symmetric square root
\begin{equation}\label{eq:spectralcoords}
 B=S^{1/2}=U\operatorname{diag}\left(\sqrt{\frac{\sigma^2}{\lambda_j+\sigma^2}}\right)U^\top\in\R^{n\times n},\qquad BB^\top=S.
\end{equation}
Its rows $b_i^\top$, with $b_i\in\R^n$, are the \emph{spectral coordinates} of the observations. They play the role of Gale vectors for a general kernel. As for orthonormal Gale vectors, only their lengths $\|b_i\|^2=S_{ii}$ and inner products $b_i^\top b_j=S_{ij}$ matter, since any other factor of $S$ has the form $BO$ with $O$ orthogonal. The length $\|b_i\|^2$ generalises \eqref{eq:leverage}: a perturbation of size one at observation $i$ leaves a soft residual of size $S_{ii}$ at that observation.

Two quadratic quantities must be kept apart. The quadratic term of the marginal likelihood is $y^\top Qy=y^\top Sy/\sigma^2=\|B^\top y\|^2/\sigma^2$, and the group test of Section~\ref{sec:tests} is built from it. The soft residual energy $\|Sy\|^2=y^\top S^2y$ instead enters the noise estimate of Section~\ref{sec:noise}. The two coincide for an exact projector but not for $S$.

\subsection{The residual matrix in the small-noise limit}
\begin{lemma}\label{prop:limit}
For a fixed positive-semidefinite $K$,
\begin{equation}
 \lim_{\sigma^2\downarrow0}\sigma^2(K+\sigma^2I)^{-1}=P_N,
\end{equation}
where $P_N$ is the orthogonal projector onto $\nul(K)$. For any factorisation $K=\Phi\Sigma_w\Phi^\top$ with positive definite $\Sigma_w$, this is the residual projector \eqref{eq:projector}, since $\nul(K)=\nul(\Phi^\top)$ by \eqref{eq:null-kernel-feature}.
\end{lemma}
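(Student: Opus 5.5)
The plan is to diagonalise and pass to the limit eigenvalue by eigenvalue. Take the eigendecomposition $K=U\Lambda U^\top$ already used in \eqref{eq:spectrum}, with $U$ orthogonal and $\lambda_j\ge0$. Then, exactly as in \eqref{eq:spectrum},
\[
\sigma^2(K+\sigma^2I)^{-1}=U\operatorname{diag}\left(\frac{\sigma^2}{\lambda_j+\sigma^2}\right)U^\top .
\]
Since $U$ does not depend on $\sigma^2$ and $n$ is finite, the matrix limit reduces to the scalar limits of the diagonal entries.

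For each $j$ the scalar limit is immediate. If $\lambda_j=0$, the factor equals $1$ for every $\sigma^2>0$. If $\lambda_j>0$, the factor is at most $\sigma^2/\lambda_j$, so it tends to $0$. The limit matrix is therefore $\sum_{j:\lambda_j=0}u_ju_j^\top$.

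Next, this sum must be identified with $P_N$. Because $U$ is orthogonal, the eigenvectors $u_j$ with $\lambda_j=0$ form an orthonormal basis of $\nul(K)$. A vector $v=\sum_j c_ju_j$ satisfies $Kv=\sum_j\lambda_jc_ju_j=0$ exactly when $c_j=0$ for all $j$ with $\lambda_j>0$. The sum of $u_ju_j^\top$ over an orthonormal basis of a subspace is the orthogonal projector onto that subspace, so the limit is the projector onto $\nul(K)$.

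For the second statement, combine \eqref{eq:null-kernel-feature}, $\nul(K)=\nul(\Phi^\top)$ for positive definite $\Sigma_w$, with the characterisation in \eqref{eq:projector} of $I-\Phi\Phi^+=GG^\top$ as the orthogonal projector onto $\nul(\Phi^\top)$. Two orthogonal projectors onto the same subspace coincide.

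There is no real obstacle. The only point needing care is that the eigenvalues are fixed while $\sigma^2$ varies, so the convergence is uniform over the finitely many $j$. The factor $\sigma^2/\lambda_j$ also gives a quantitative rate: the error in operator norm is at most $\sigma^2/\lambda_{\min}^+$, where $\lambda_{\min}^+$ is the smallest positive eigenvalue. This bound may be worth recording for later use.
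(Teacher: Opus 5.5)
Your proof is correct and follows the paper's argument: diagonalise via \eqref{eq:spectrum}, observe that zero eigenvalues keep weight one while positive ones have weights $\sigma^2/(\lambda_j+\sigma^2)\to0$, and use finite dimensionality to get convergence of the matrix. You also spell out two things the paper leaves implicit: that $\sum_{j:\lambda_j=0}u_ju_j^\top$ is the orthogonal projector onto $\nul(K)$, and the operator-norm rate $\sigma^2/\lambda_{\min}^+$.
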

\begin{proof}
In \eqref{eq:spectrum}, a zero eigenvalue has weight one for every $\sigma^2>0$, while each positive eigenvalue has weight $\sigma^2/(\lambda_j+\sigma^2)\to0$. Since the matrix is finite dimensional, convergence holds in operator norm.
\end{proof}
Together with \eqref{eq:null-kernel-feature}, this shows that for a finite-rank kernel the residual matrix $S$ converges to the projector $P_N=GG^\top$ of the underlying feature configuration. This is a fixed-design limit. For full-rank $K$ the limit is zero, even when several eigenvalues are very small. Truncating the spectrum at a positive noise level is therefore an approximation, not an exact Gale transform of a full-rank kernel. Convergence of the matrix also does not make the GP test and the residual test identical, because their degrees of freedom can differ (Section~\ref{sec:power}).

\subsection{Hard structural relations and soft prior restrictions}\label{sec:hardsoft}
A finite-rank GP imposes hard structural relations, exactly as a linear model does. Four names used in this paper refer to this one object seen from different sides: the null space of $K$, the combinations of function values that the prior fixes at zero, the consistency relations of the model at the design, and the residual space $\nul(\Phi^\top)$, whose orthonormal basis $G$ has the Gale vectors as its rows. A full-rank GP imposes only soft prior restrictions. It can fit any data, but it pays a price in residual energy for patterns that its prior regards as implausible. Table~\ref{tab:dictionary} lists the corresponding objects. We write \emph{Gale vectors} and \emph{structural} only for the exact case, and \emph{spectral coordinates} and \emph{soft} for the general case.

\begin{table}[!htbp]
\centering
\small
\begin{tabular}{p{0.24\linewidth}p{0.33\linewidth}p{0.33\linewidth}}
\toprule
 & Finite-rank kernel (exact) & General kernel (spectral)\\
\midrule
Coordinates of observation $i$ & Gale vector $g_i$, row of $G$ & spectral coordinates $b_i$, row of $B=S^{1/2}$\\
Residual operator & projector $P_N=GG^\top$ & residual matrix $S=BB^\top=\sigma^2Q$\\
Restriction & hard structural relation, $v^\top f=0$ for $v\in\nul(K)$ & soft prior restriction, $\Var(u_j^\top f)=\lambda_j\ll\sigma^2$\\
A departure along it is & a structural contradiction, $\eta\propto1/\sigma_0^2$ & improbable under the prior, $\eta\to(u_j^\top\delta)^2/\lambda_j$ as $\sigma_0\to0$\\
Visible fraction at $i$ & $\|g_i\|^2=1-h_{ii}$ & $\|b_i\|^2=S_{ii}$\\
Minimal witnesses & circuits (Proposition~\ref{prop:circuits}) & no direct combinatorial analogue\\
Small-noise limit & $S\to P_N$ & $S\to0$\\
\bottomrule
\end{tabular}
\caption{Dictionary between the exact finite-rank construction and its spectral counterpart for general kernels. Here $\eta$ is the noncentrality of Section~\ref{sec:tests} and $\sigma_0^2$ the noise variance.}\label{tab:dictionary}
\end{table}

\section{Group consistency tests and their power}\label{sec:tests}
A departure can be detectable for two different reasons. It may violate an exact relation of the design, in which case its signal-to-noise ratio can grow without bound as the measurements become more precise. Or it may remain representable by the model and be rejected only because the prior assigns it little probability. The distinction is visible directly in the noncentrality of the test.

Before any test is chosen, the precision matrix $Q$ already describes how sensitive a design is to every departure from the model. Suppose that the responses contain a deterministic discrepancy $\delta\in\R^n$ in addition to the GP and the noise, $y=f+\delta+\varepsilon$ with $\varepsilon\sim\mathcal N(0,\sigma_0^2I)$. Section~\ref{sec:power} shows that the test of all observations then has noncentrality
\begin{equation}\label{eq:sensitivity}
 \eta=\delta^\top Q\delta
 =\sum_{j=1}^n\frac{(u_j^\top\delta)^2}{\lambda_j+\sigma_0^2}.
\end{equation}
Here $\eta$ is the noncentrality parameter of the chi-square test: larger values mean that the alternative is more strongly separated from the null and is therefore easier to detect. The matrix $Q$ and the eigenpairs $(\lambda_j,u_j)$ of $K$ are those of Section~\ref{sec:gp}, with $\sigma^2=\sigma_0^2$.

The spectrum of $K$ shows where this detectability comes from (Figure~\ref{fig:schematic}). Along an eigenvector in $\nul(K)$, the contribution to $\eta$ is $(u_j^\top\delta)^2/\sigma_0^2$. This part of the departure violates an exact consistency relation, so its detectability grows without bound as the noise decreases. We call this \emph{structural} detection. Along an eigenvector with $\lambda_j\gg\sigma_0^2$, the contribution is bounded by $(u_j^\top\delta)^2/\lambda_j$ as the noise decreases. Such a departure remains representable by the model and can only be judged improbable under the prior, so we call this \emph{prior-based} detection.

A soft prior restriction, with $0<\lambda_j\ll\sigma_0^2$, initially behaves like a structural direction because its contribution is approximately $(u_j^\top\delta)^2/\sigma_0^2$ while the noise dominates $\lambda_j$. As $\sigma_0^2\to0$, however, the contribution approaches $(u_j^\top\delta)^2/\lambda_j$ and therefore remains bounded. We reserve the word \emph{structural} for directions in $\nul(K)$.

At the end-point design of Figure~\ref{fig:teaser}(a), curvature lies entirely in the prior-based part. The GP can therefore reject a sufficiently large curvature even though no consistency relation involving curvature is violated (Figure~\ref{fig:schematic}(b)). We next express this distinction for tests of groups of observations and for families of departures.
\begin{figure}[!htbp]
\centering
\includegraphics[width=\linewidth]{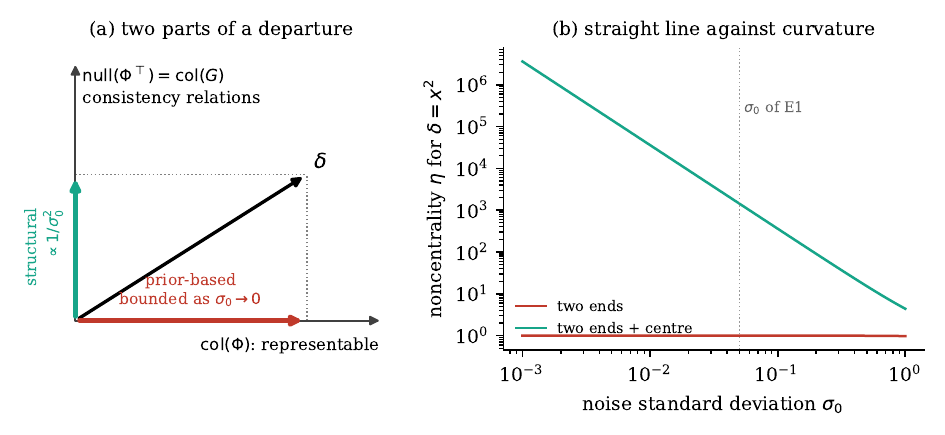}
\caption{Structural and prior-based detection for a finite-rank kernel. (a) A departure $\delta$ splits into a part that the model can represent, in $\col(\Phi)$, and a part that violates consistency relations, in $\nul(\Phi^\top)$. The first can be detected only through the prior, with a noncentrality that stays bounded as the noise decreases. The second is detected structurally, with a noncentrality proportional to $1/\sigma_0^2$. (b) Noncentrality \eqref{eq:sensitivity} of the test of all $40$ observations against a unit curvature $\delta(x)=x^2$, under the straight-line kernel $1+xx'$, as a function of the noise standard deviation $\sigma_0$, for the two designs of Figure~\ref{fig:teaser}(a). At the end-point design the curvature is an intercept shift, and its noncentrality stays close to $1$. With four inputs at the centre it grows as $1/\sigma_0^2$.}\label{fig:schematic}
\end{figure}

\subsection{The group test}
The group test asks a simple question. Can the responses of a group of observations be predicted from all the other responses, within the uncertainty of that prediction? If not, the group is incompatible with the model.

To make this precise, split the observation indices $\{1,\ldots,n\}$ into a set $C$ of \emph{candidate} observations, which we test, and the set $R$ of the remaining, \emph{retained} observations. A subscript selects entries: $y_C\in\R^{|C|}$ contains the responses $y_i$ with $i\in C$, and $A_{RC}\in\R^{|R|\times|C|}$ is the block of $A$ with rows in $R$ and columns in $C$, so that, after reordering,
\begin{equation}
 A=\begin{pmatrix}A_{RR}&A_{RC}\\A_{CR}&A_{CC}\end{pmatrix},\qquad A_{CR}=A_{RC}^\top.
\end{equation}
For a fixed noise variance $\sigma^2$, Gaussian conditioning (Appendix~\ref{app:block}) gives the prediction of $y_C$ from $y_R$ and its uncertainty,
\begin{equation}\label{eq:conditional}
 y_C\mid y_R\sim\mathcal N(A_{CR}A_{RR}^{-1}y_R,V_C),\qquad
 V_C=Q_{CC}^{-1}.
\end{equation}
The prediction error $z_C=y_C-A_{CR}A_{RR}^{-1}y_R\in\R^{|C|}$ is the cross-validation residual of the group \cite{dubrule1983,sundararajan2001,ginsbourger2025}. The group test statistic measures its size against its covariance,
\begin{equation}\label{eq:group}
 T_C=z_C^\top V_C^{-1}z_C.
\end{equation}
In words, $T_C$ measures how surprising the responses in $C$ are, given all other responses, after accounting for the uncertainty of the prediction and for the correlations within the group. Appendix~\ref{app:block} shows two equivalent forms. With the representer weights $\alpha=Qy$ of the posterior mean \cite{rasmussen2006}, $T_C=\alpha_C^\top Q_{CC}^{-1}\alpha_C=e_C^\top S_{CC}^{-1}e_C/\sigma^2$. The last form uses only the soft residual and the residual matrix, so a test computed in spectral coordinates is exactly the classical multiple-case test, and changing coordinates cannot change which groups are rejected.

Two different null hypotheses can be tested, and they must be kept apart. The \emph{structural null} of a linear model states only that the noiseless responses lie in $\col(\Phi)$, with fixed but unknown coefficients. Its residual test $\|G^\top y\|^2/\sigma^2\sim\chi^2_{n-r}$ uses the consistency relations alone. The GP null defined next also fixes a prior on the coefficients or the function, the mean and the noise. A rejection of the GP null can therefore come from the prior, the mean or the noise specification, and it does not by itself show that a structural assumption, such as the degree of a polynomial, is violated.

We call the model \emph{fully specified} when its mean function, its kernel including all hyperparameters, and its noise variance are fixed in advance rather than estimated from the data under test. The \emph{GP null} is the null hypothesis that the responses were generated by this model, that is, $y\sim\mathcal N(0,A)$ with the specified kernel and noise variance. Under the fully specified GP null,
\begin{equation}\label{eq:chisq}
 T_C\mid y_R\sim\chi^2_{|C|},
\end{equation}
exactly, for a fixed candidate group. Hyperparameters estimated from the same data generally destroy this exact calibration \cite{bachoc2013}, and Section~\ref{sec:noise} shows that a fitted noise variance can absorb the very discrepancy under test, so known measurement noise, replicates or a separate reference set are preferable.

\subsection{Power of the group test}\label{sec:power}
The \emph{power} of a test is the probability that it rejects the model when the model is wrong. For chi-square tests it is governed by a single number, the noncentrality. Proposition~\ref{prop:power} is a classical consequence of Gaussian conditioning and of the distribution of quadratic forms in normal variables, stated in the form needed here. Its proof is in Appendix~\ref{app:proofs}.
\begin{proposition}\label{prop:power}
Let $y=f+\delta+\varepsilon$, with $f\sim\mathcal{GP}(0,k)$, $\varepsilon\sim\mathcal N(0,\sigma_0^2I)$, and a deterministic discrepancy $\delta\in\R^n$, and compute $T_C$ in \eqref{eq:group} with the same kernel $k$ and the true noise variance $\sigma^2=\sigma_0^2$. Then, conditionally on $y_R$, and hence also unconditionally, $T_C$ has a noncentral chi-square distribution with $|C|$ degrees of freedom and noncentrality
\begin{equation}\label{eq:noncentralgeneral}
 \eta_C=\Delta_C^\top Q_{CC}\Delta_C=(Q\delta)_C^\top Q_{CC}^{-1}(Q\delta)_C,
\end{equation}
where $\Delta_C=\delta_C-A_{CR}A_{RR}^{-1}\delta_R\in\R^{|C|}$ is the part of the discrepancy on $C$ that the remaining observations do not predict. If $\delta$ is zero outside $C$, then $\Delta_C=\delta_C$ and $\eta_C=\delta_C^\top Q_{CC}\delta_C$.
\end{proposition}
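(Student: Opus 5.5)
The plan is to reduce everything to Gaussian conditioning on the shifted response vector. Write $y=\tilde y+\delta$, where $\tilde y=f+\varepsilon\sim\mathcal N(0,A)$ with $A=K+\sigma_0^2I$. The statistic $T_C$ is built from $z_C=y_C-A_{CR}A_{RR}^{-1}y_R$, which is linear in $y$, so it splits as $z_C=\tilde z_C+\Delta_C$. Here $\tilde z_C=\tilde y_C-A_{CR}A_{RR}^{-1}\tilde y_R$, and $\Delta_C=\delta_C-A_{CR}A_{RR}^{-1}\delta_R$ is deterministic.

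By the conditioning identity \eqref{eq:conditional}, applied to the centred vector $\tilde y$, we have $\tilde z_C\mid\tilde y_R\sim\mathcal N(0,V_C)$ with $V_C=Q_{CC}^{-1}$. A standard Gaussian fact is that $\tilde z_C$ is independent of $\tilde y_R$: it is the residual of the best linear predictor, hence uncorrelated with $\tilde y_R$, and the two are jointly Gaussian. Since $y_R=\tilde y_R+\delta_R$ with $\delta_R$ fixed, conditioning on $y_R$ is the same as conditioning on $\tilde y_R$. So, conditionally on $y_R$, $z_C\sim\mathcal N(\Delta_C,V_C)$. Whitening with $W=V_C^{-1/2}=Q_{CC}^{1/2}$ gives $Wz_C\sim\mathcal N(W\Delta_C,I_{|C|})$. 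Then $T_C=\|Wz_C\|^2$ is noncentral chi-square with $|C|$ degrees of freedom and noncentrality $\|W\Delta_C\|^2=\Delta_C^\top Q_{CC}\Delta_C$. This conditional law does not depend on $y_R$, so it is also the unconditional law, which gives the first form of \eqref{eq:noncentralgeneral}.

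For the second form, I use the block-inverse identity already invoked in Appendix~\ref{app:block} to rewrite $T_C$ via representer weights: $Q_{CC}z_C=(Qy)_C$. This follows from $Q_{CR}=-Q_{CC}A_{CR}A_{RR}^{-1}$, the standard Schur-complement formula for the off-diagonal block of $A^{-1}$. Applying the same linear identity to $\delta$ gives $Q_{CC}\Delta_C=(Q\delta)_C$, and hence
\[
\Delta_C^\top Q_{CC}\Delta_C=(Q\delta)_C^\top Q_{CC}^{-1}(Q\delta)_C.
\]
Finally, if $\delta_R=0$ then $\Delta_C=\delta_C$ trivially, which gives the last claim.

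The only step needing care is the one that lets the conditioning on $y_R$ be swapped for conditioning on $\tilde y_R$ and then dropped. This rests on two facts: the shift $\delta$ enters $y_R$ deterministically, and $\tilde z_C$ is independent of $\tilde y_R$. I would verify the independence directly by computing $\operatorname{Cov}(\tilde z_C,\tilde y_R)=A_{CR}-A_{CR}A_{RR}^{-1}A_{RR}=0$. The rest is the routine Schur-complement algebra, which I would take from Appendix~\ref{app:block}.
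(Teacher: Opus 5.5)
Your proof is correct and follows essentially the same route as the paper's. The discrepancy shifts the mean of $z_C$ given $y_R$ to $\Delta_C$ and leaves its covariance $V_C=Q_{CC}^{-1}$ unchanged, whitening then gives a noncentral chi-square with noncentrality $\Delta_C^\top Q_{CC}\Delta_C$, and the block-inverse identity $(Q\delta)_C=Q_{CC}\Delta_C$ yields the second form. Your explicit check that $\operatorname{Cov}(\tilde z_C,\tilde y_R)=0$ spells out what the paper states in one line, namely that the conditional law does not depend on $y_R$ and is therefore also the unconditional law.
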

In words, the test is more likely to reject the larger the part of the discrepancy that the other observations cannot predict, measured against the uncertainty of that prediction. For $C=\{1,\ldots,n\}$, $\Delta_C=\delta$ and \eqref{eq:noncentralgeneral} reduces to \eqref{eq:sensitivity}. The power at significance level $\beta$ is $\Pr(T_C>\chi^2_{|C|,1-\beta})$ under the noncentral chi-square distribution, where $\chi^2_{k,q}$ denotes the $q$-quantile of the chi-square distribution with $k$ degrees of freedom. It holds for every kernel and every noise level, and it depends only on the kernel, the noise variance, the design and the discrepancy, so it can be computed before any response is observed.

\subsection{Geometric meaning}
Proposition~\ref{prop:power} has a geometric form (Appendix~\ref{app:proofs}),
\begin{equation}\label{eq:noncentralsoft}
 \eta_C=\frac{\|\Pi_C B^\top\delta\|^2}{\sigma_0^2},
 \qquad\text{and}\qquad
 \eta_C=\frac{\|B_C^\top\delta_C\|^2}{\sigma_0^2}
 \quad\text{if $\delta$ is zero outside $C$},
\end{equation}
where $B_C\in\R^{|C|\times n}$ contains the spectral coordinates
$b_i$, $i\in C$, of \eqref{eq:spectralcoords} as its rows, and
$\Pi_C\in\R^{n\times n}$ is the orthogonal projector onto their span.
Each observation has a vector $b_i$, and the discrepancy is represented
by the weighted sum
\[
B^\top\delta=\sum_i \delta_i b_i\in\R^n.
\]
The noncentrality, and hence the power, is determined by how much of
this discrepancy vector lies in the span of the spectral coordinates of
the tested observations, relative to the noise level. Short spectral
coordinates make local departures less visible, while nearly aligned
coordinates provide similar rather than independent directions of
detection.

For finite-rank kernels the picture becomes exact. For a discrepancy supported on $C$ and a fixed finite-rank kernel, Lemma~\ref{prop:limit} gives $\sigma_0^2Q_{CC}\to(P_N)_{CC}=G_CG_C^\top$ as $\sigma_0^2\to0$, where $G_C\in\R^{|C|\times(n-r)}$ contains the Gale vectors of the observations in $C$ as its rows. Hence
\begin{equation}\label{eq:noncentral}
 \lim_{\sigma_0^2\to0}\sigma_0^2\eta_C=\|G_C^\top\delta_C\|^2,
\end{equation}
so that $\eta_C\approx\|G_C^\top\delta_C\|^2/\sigma_0^2$ for small noise whenever $G_C^\top\delta_C\neq0$. This is the noncentrality of the residual test $(P_Ny)_C^\top(P_N)_{CC}^+(P_Ny)_C/\sigma_0^2$ in the linear model \eqref{eq:linear}. The two tests still differ. The GP test has $|C|$ degrees of freedom and the residual test $\operatorname{rank}((P_N)_{CC})$, so their thresholds and their power differ even when their noncentralities agree. In both cases the power depends on the design only through the lengths of, and the angles between, the vectors of the affected observations. These are Gale vectors for a finite-rank kernel and spectral coordinates otherwise. Three consequences follow. We state them for exact Gale vectors and the residual tests of the linear model.
\begin{enumerate}
\item \emph{A departure that the model can represent violates no relation.} At the end-point design of Figure~\ref{fig:teaser}(a), the curvature $cx^2$ takes the same value $c$ at both inputs, so it equals an intercept shift, and no residual test can see it, however large $c$ is. The GP test can still reject, because a large intercept is improbable under the prior on the coefficients. This prior-based power grows with $c$ but not as the noise decreases, and it says nothing about curvature (experiment E1).
\item \emph{Offsets at observations with high leverage are hard to detect.} Such observations have short Gale vectors. In Example~\ref{ex:square}, an offset at a corner keeps $30\%$ of its squared size in the tests, and an offset at the centre $80\%$.
\item \emph{Observations with identical Gale vectors cannot be told apart.} In Example~\ref{ex:square}, offsets $c$ at corners $1$ and $3$ produce the same failures of the relations as an offset $2c$ at corner $1$.
\end{enumerate}

\subsection{Structured departures}\label{sec:structured}
A structural hypothesis is usually tested against a family of departures rather than a single one, such as all curvatures or all interactions. Let the departures of interest be $\delta=D\theta$ for a matrix $D\in\R^{n\times p}$, whose columns are $p$ basis directions such as a curvature or an interaction, and an unknown $\theta\in\R^p$.
\begin{proposition}\label{prop:family}
Under the assumptions of Proposition~\ref{prop:power}, the noncentrality of $T_C$ against $\delta=D\theta$ is the quadratic form $\eta_C(\theta)=\theta^\top M_C\theta$ with $M_C=D^\top Q_{:C}Q_{CC}^{-1}Q_{C:}D\in\R^{p\times p}$, where $Q_{:C}$ denotes the columns of $Q$ indexed by $C$. The least detectable unit departure has noncentrality $\min_{\|\theta\|=1}\eta_C(\theta)=\lambda_{\min}(M_C)$, and for $\theta\sim\mathcal N(0,\tau^2I_p)$ the mean noncentrality is $\mathbb E\,\eta_C=\tau^2\tr M_C$. For the test of all observations, $C=\{1,\ldots,n\}$, $M_C=D^\top QD$, and for a fixed finite-rank kernel
\begin{equation}\label{eq:family}
 \lim_{\sigma_0^2\to0}\sigma_0^2M_C=(G^\top D)^\top(G^\top D).
\end{equation}
The departures $\theta\in\nul(G^\top D)$ violate no consistency relation of the design and are invisible to every residual test. The family is \emph{structurally testable} in every direction exactly when $G^\top D$ has full column rank $p$, that is, when no non-zero departure in the family is representable by the model at the design.
\end{proposition}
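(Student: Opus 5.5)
The plan is to substitute $\delta=D\theta$ into the second form of the noncentrality in Proposition~\ref{prop:power}, $\eta_C=(Q\delta)_C^\top Q_{CC}^{-1}(Q\delta)_C$. Since $(Q\delta)_C=Q_{C:}\delta=Q_{C:}D\theta$, this gives at once $\eta_C(\theta)=\theta^\top D^\top Q_{:C}Q_{CC}^{-1}Q_{C:}D\,\theta=\theta^\top M_C\theta$, using $Q_{C:}^\top=Q_{:C}$ by symmetry of $Q$. The matrix $M_C$ is symmetric positive semidefinite because $Q_{CC}^{-1}$ is positive definite; $Q_{CC}$ is a principal submatrix of the positive definite $Q$. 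The minimum over unit $\theta$ is then the Rayleigh--Ritz characterisation of $\lambda_{\min}(M_C)$. For $\theta\sim\mathcal N(0,\tau^2I_p)$ I would use $\mathbb E\,\theta^\top M\theta=\tr(M\,\mathbb E\theta\theta^\top)=\tau^2\tr M_C$.

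For $C=\{1,\ldots,n\}$ the blocks are $Q_{:C}=Q_{CC}=Q_{C:}=Q$, so $M_C=D^\top QQ^{-1}QD=D^\top QD$. The small-noise limit follows from Lemma~\ref{prop:limit}: $\sigma_0^2Q(\sigma_0^2)\to P_N=GG^\top$ in operator norm for a fixed finite-rank kernel. Hence $\sigma_0^2M_C=D^\top(\sigma_0^2Q)D\to D^\top GG^\top D=(G^\top D)^\top(G^\top D)$, because $D$ is fixed and multiplication is continuous. I would take $G$ here as an orthonormal basis of $\nul(K)$, which equals $\nul(\Phi^\top)$ by \eqref{eq:null-kernel-feature}.

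For the structural statements, $\theta\in\nul(G^\top D)$ means $G^\top\delta=0$, i.e.\ $\delta$ is orthogonal to every consistency relation. Equivalently $P_N\delta=0$, so $\delta\in\col(\Phi)$ is representable. In the linear model, every residual test is a function of $P_Ny=P_N(\Phi w+\delta+\varepsilon)=P_N\varepsilon$, so its distribution does not depend on $\theta$ and the test has no power. Full column rank of $G^\top D$ is equivalent to $G^\top D\theta\ne0$ for all $\theta\ne0$, i.e.\ no non-zero $D\theta$ lies in $\col(\Phi)$. It is also equivalent to the limiting matrix $(G^\top D)^\top(G^\top D)$ being positive definite, so every direction has noncentrality growing like $1/\sigma_0^2$.

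The only step needing care is the one for ``invisible to every residual test''. There I would make precise that residual tests are measurable functions of $P_Ny$ (or of $G^\top y$), and note that $\nul(G^\top D)$ does not depend on the choice of Gale basis, since $GM$ with $M$ invertible gives the same null space. The rest is direct substitution.
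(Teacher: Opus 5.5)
Your proposal is correct and follows the paper's proof essentially step by step. You substitute $(Q\delta)_C=Q_{C:}D\theta$ into the second form of \eqref{eq:noncentralgeneral}, read off the minimum and the mean from $M_C$, reduce $M_C$ to $D^\top QD$ for the full group and pass to the limit with Lemma~\ref{prop:limit}, and identify $\nul(G^\top D)$ with the departures lying in $\col(\Phi)$. Your extra remarks only make explicit what the paper's short proof leaves implicit: positive semidefiniteness of $M_C$, residual tests as functions of $P_Ny=P_N\varepsilon$, and the fact that $\nul(G^\top D)$ does not depend on the choice of Gale basis.
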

In words, for a whole family of departures one small matrix says which members a design can detect structurally and which only through the prior. Both quantities depend on how the columns of $D$ are scaled, so they are comparable only under a stated amplitude convention, and we take the columns of $D$ to be linearly independent. The mean noncentrality is not the average power. Average power averages the noncentral chi-square tail probability over the departures, as in experiment E2, and need not rank designs in the same order. Both are nevertheless computable from the design before any response is observed. Structural testability is characterised by the rank of $G^\top D$. The GP test still gives every non-zero departure a positive noncentrality, since $Q$ is positive definite, but for $\theta\in\nul(G^\top D)$ this is the prior-based part of Section~\ref{sec:tests}: it does not grow as the noise decreases, as at the end-point design.

The columns of $G^\top D$ are the Gale images $\sum_iD_{ij}g_i$ of the basis directions of the family, so the Gale configuration shows which structured departures a design can detect structurally. It also defines a \emph{minimal structurally informative design}, the smallest design for which $G^\top D$ has full column rank. For a polynomial of degree $q$ against a departure of degree $q+1$, this is any design with $q+2$ distinct inputs (Example~\ref{ex:veronese}). Failure to reject may consequently reflect an untestable discrepancy, insufficient power, or genuine consistency. Propositions~\ref{prop:power} and~\ref{prop:family} do not decide between these from the data, but they identify in advance which discrepancies no consistency relation of the design can reveal and which it can detect only weakly. Because adding an observation changes the configuration, \eqref{eq:noncentralgeneral} also gives the power that the test would have after a candidate new observation, before that observation is made. Experiments E1 to E3 use this to choose where to observe.

\subsection{Relation to discrimination design and Fisher information}\label{sec:discrimination}
Choosing observations to maximise the power of a lack-of-fit test is a classical idea. Designs that detect the inadequacy of a regression model were constructed in \cite{atkinson1972,jones1978}, building on the treatment of bias from omitted terms in \cite{box1959}. The best-known criterion is T-optimality \cite{atkinson1975}. One of two rival regression models is assumed to be true, with known parameters, and the design maximises the sum of squared deviations between the true model and the best fit of the rival model. When the rival is the linear model \eqref{eq:linear} and the true model differs from it by $\delta$, this sum is $\min_w\|\delta-\Phi w\|^2=\|P_N\delta\|^2=\|G^\top\delta\|^2$, which is $\sigma_0^2$ times the noncentrality of the residual test, the limit in \eqref{eq:noncentral}. A T-optimal design therefore maximises the length of the Gale image of the departure. In plain terms, T-optimality asks whether the specified departure can still be mimicked by the null model on the chosen design. If the minimum is zero, the design cannot distinguish the two models structurally, and if it is large, the departure necessarily leaves a large residual. At the end-point design of Figure~\ref{fig:teaser}(a), $x^2$ cannot be told apart from an intercept shift, so the T-optimality criterion is zero, and adding centre points makes the curvature leave a non-zero residual. Like experiment E1, it requires the departure to be specified in advance. Properties of such designs are studied in \cite{dette2009}, extensions to non-normal models in \cite{lopezfidalgo2007}, and the general theory in \cite{atkinson2007}. Sequential and Bayesian approaches choose experiments to discriminate between mechanistic models \cite{box1967} or, with GP models and GP surrogates, between kernels and between mechanistic models \cite{gardner2015,olofsson2018}.

Two further classical quantities coincide with these noncentralities. For a fixed discrepancy, the conditional distributions of $y_C$ given $y_R$ under the alternative and under the null are Gaussian with the same covariance $V_C$ and means that differ by $\Delta_C$, so $\eta_C$ is twice their Kullback--Leibler divergence. For the family $\delta=D\theta$ and the test of all observations, the responses are distributed as $\mathcal N(D\theta,A)$, whose Fisher information about $\theta$ is $D^\top QD=M_C$. The worst-case criterion of Proposition~\ref{prop:family} is therefore E-optimality for the departure parameters \cite{atkinson2007}, and the structural part of this information is the part that grows as the noise decreases.

For finite-rank models and a specified alternative, maximising the structural noncentrality is therefore classical discrimination design. It is T-optimality, and for Gaussian models with a common covariance it is also maximisation of the \emph{projected} Kullback--Leibler divergence \cite{lopezfidalgo2007},
$\min_w\|\delta-\Phi w\|^2/2\sigma_0^2=\|G^\top\delta\|^2/2\sigma_0^2$.
The rival model is fitted by its best coefficients in this quantity. The \emph{full-GP} divergence $\frac12\delta^\top Q\delta$ is different because the GP prior also penalises coefficients within the representable subspace.

This difference is the point of the decomposition used here. It records whether the noncentrality comes from a component outside the model's representable space or from a component that remains representable but is improbable under the prior. For general kernels the same distinction is read spectrally, while in the finite-rank case the circuits identify the smallest sets of observations on which an exact contradiction can occur. At the end-point design of experiment E1, the T-optimality criterion and the projected divergence are zero for curvature because $x^2$ is representable there as an intercept shift. The GP test and the full-GP divergence can nevertheless be large because that intercept is improbable under the prior.

\section{Fitted noise as aggregate test failure}\label{sec:noise}
The calibration above assumes that the noise variance is known. When it is estimated from the same observations that are being tested, discrepancy energy can instead be absorbed into the fitted noise level. For fixed $K$, let $\hat\sigma^2$ maximise the log marginal likelihood, and write $\hat Q=Q(\hat\sigma^2)$, $\hat S=S(\hat\sigma^2)$ and $\hat e=\hat Sy$. At any positive interior maximum (Appendix~\ref{app:noise}),
\begin{equation}\label{eq:fixedpoint}
 \hat\sigma^2=\frac{\|\hat e\|^2}{\tr\hat S}
 =\frac{\|\hat e\|^2}{n-\tr(K\hat Q)}.
\end{equation}
In words, the fitted noise variance is the leftover misfit divided by the effective number of directions that the model cannot fit. The numerator is the total soft residual energy, that is, the aggregate failure of the hard relations and the soft prior restrictions. The denominator is an effective number of residual degrees of freedom, $n$ minus the effective number of parameters $\tr(K\hat Q)$ of a linear smoother \cite{hastie1990,wahba1990}. Equation~\eqref{eq:fixedpoint} is the GP form of MacKay's evidence fixed point \cite{mackay1992}. For a finite-rank kernel it approaches the classical estimator $\|P_Ny\|^2/(n-r)$ when $\hat\sigma^2$ is small compared with the non-zero eigenvalues of $K$. A discrepancy with energy along hard relations or soft prior restrictions passes almost unattenuated into the numerator and leaves the denominator nearly unchanged, so it can inflate $\hat\sigma^2$ (Appendix~\ref{app:noise}). The inflated value then enters the predictive variance at every input, and it also shrinks the statistic of the test of all observations, and typically the group statistics, so fitting the noise to the data under test can switch the checks off. In experiment S1 (Appendix~\ref{sec:detectionandforgetting}), three shifted observations among $40$ raise the fitted noise variance to almost four times its true value (Table~\ref{tab:s1}).

Tests computed with a fitted noise variance therefore tend to reject less often when the model is wrong, and a non-rejection is weaker evidence than it appears. The consequences for deleting data are examined in Appendix~\ref{sec:background}.

\section{Experiments}\label{sec:experiments}
Three experiments test the claims of Section~\ref{sec:tests} on controlled problems with known discrepancies. E1 asks whether a test can reject a model without testing the assumption in question, and whether the observation that best tests a GP is the one of largest predictive variance. E2 asks whether prospective power still helps when the location of the discrepancy is unknown. E3 asks which two-dimensional designs can test additivity. In all three the null model is fully specified, so Proposition~\ref{prop:power} applies exactly, and predicted powers are checked against rejection rates in $4000$ simulated data sets. All tests have level $0.05$. Full settings are given in Appendix~\ref{app:details}, and a simulation study of detection and forgetting is in Appendicex~\ref{sec:detectionandforgetting}.

\subsection{Structural contradiction or prior improbability (E1)}\label{sec:e1c}
\emph{Question.} Can a test reject a model without testing the assumption in question? \emph{Set-up.} The null model is the straight-line GP with kernel $1+xx'$ of Example~\ref{ex:polykernel}, whose prior $w\sim\mathcal N(0,I)$ on the intercept and slope is implied by the kernel, with noise variance $\sigma_0^2=0.0025$. The alternative adds a curvature $cx^2$. We compare the GP test of all $40$ observations, which tests the GP null, with the exact residual test $\|G^\top y\|^2/\sigma_0^2$, which tests the structural null and does not depend on the prior. The designs are $20$ inputs at each end of $[-1,1]$, $18$ at each end with $4$ at the centre, and $40$ equally spaced inputs.

\emph{Results.} At the end-point design, the $38$ consistency relations compare replicates only, and the residual test has power equal to its level at every curvature, up to $c=10$ (Figure~\ref{fig:structure}). The GP test does reject eventually, with power $0.74$ at $c=5$ and essentially $1$ at $c=10$, which is $200$ noise standard deviations. It rejects because a curvature at two inputs is an intercept shift that the prior finds improbable, a rejection that is valid for the GP null but says nothing about curvature. Over the range $c\leq0.25$ its power is at most $0.051$. With four inputs at the centre, the design contains circuits such as $y_i+y_j-2y_k=0$, with one observation at each of $-1$, $1$ and $0$, and both tests detect the curvature through them, with power $0.69$ and $0.68$ at $c=0.125$. Equally spaced inputs give $0.75$ and $0.73$. Adding centre points to a two-level design in order to test for curvature is standard practice in response-surface methodology \cite{boxdraper1987}, so the structural half of this result is classical. What the GP adds is the other half: the GP test does reject the end-point design for a large curvature, and the decomposition of Section~\ref{sec:tests} shows that this rejection comes from the prior, not from a violated relation. \emph{Conclusion.} A rejection by the GP test does not show that the suspected assumption is wrong, and the design that is optimal for estimating a straight line contains no relation that tests whether the line is straight.

\begin{figure}[!htbp]
\centering
\includegraphics[width=0.65\linewidth]{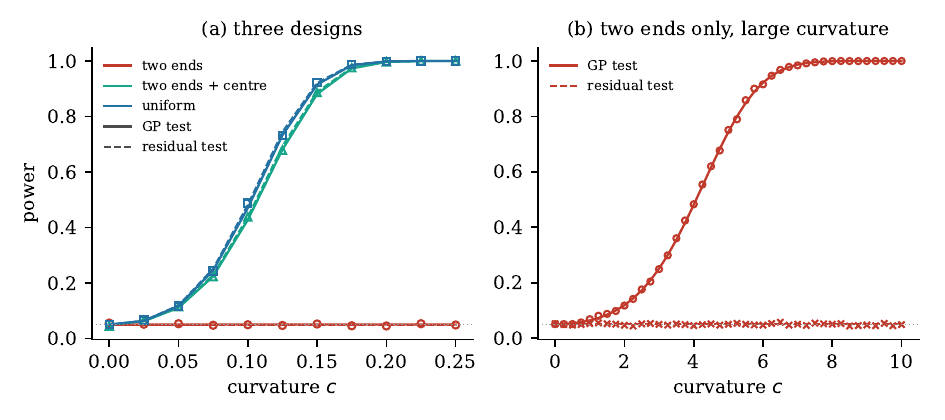}
\caption{E1, testing a straight-line assumption against a curvature $cx^2$. Solid lines show the power of the GP test predicted by Proposition~\ref{prop:power}, dashed lines the power of the exact residual test, and markers the rejection rates in $4000$ simulated data sets. The dotted line marks the level $0.05$. (a) Three designs of $40$ inputs. (b) The end-point design for large curvatures. The residual test stays at the level for every curvature, because no consistency relation of the end-point design involves curvature. The GP test eventually detects a very large curvature, as an intercept shift that is improbable under the prior.}\label{fig:structure}
\end{figure}

\subsection{Choosing for power, for discrimination or for predictive variance (E1, continued)}\label{sec:e1}
\emph{Question.} Is the observation that best tests a GP the one of largest predictive variance, and how does a classical discrimination criterion compare? \emph{Set-up.} The null model is a GP with an RBF kernel of lengthscale $0.3$ and unit output scale, zero mean and noise variance $\sigma_0^2=0.0025$, on designs of $40$ inputs in $[-1,1]$, of which $36$ are drawn uniformly and $4$ lie in a cluster at $0.2$. The alternative is a narrow bump $\delta(x)=0.2\exp(-(x-x_0)^2/2w^2)$ of width $w=0.04$ at a specified location $x_0$, tested with the group $C$ of observations within $0.1$ of $x_0$. Each of $401$ candidate inputs is scored in three ways:
\begin{itemize}
\item the \emph{power rule}: the power the test would have after adding the candidate, from \eqref{eq:noncentralgeneral};
\item the \emph{KL rule}: the full-GP Kullback--Leibler divergence $\frac12\delta^\top Q\delta$ between alternative and null after adding it;
\item the \emph{variance rule}: its current predictive variance.
\end{itemize}
We compare the choices over $20$ designs and $19$ bump locations, first with the true hyperparameters, and then with the lengthscale or the noise standard deviation wrong by a factor of $2$ when choosing.

\emph{Results.} Figure~\ref{fig:power}(b) first verifies the power formula against common offsets at four kinds of group. The simulated rejection rates agree with the prediction to within $0.022$ in every case, and an offset of four noise standard deviations is detected with probability $0.65$ at the boundary observation and $0.95$ at an interior one, because a GP absorbs an offset at the boundary more easily ($S_{ii}=0.34$ against $0.79$). For the bump at $x_0=0.1$ (Figure~\ref{fig:teaser}(b)), the current design has power $0.11$. Adding the observation of largest predictive variance, at the boundary, leaves the power at $0.11$. Adding the observation of largest predicted power, at $x=0.1$, raises it to $0.64$, although the predictive variance there is fourteen times smaller. Table~\ref{tab:e1baselines} shows that this is typical. Averaged over designs and locations, the power rule raises the power from $0.48$ to $0.72$, whereas the variance rule gives $0.51$ and a random candidate $0.49$. The KL rule gives $0.72$ as well and chooses the same input as the power rule in $77\%$ of the cases. The two are close because both reward noncentrality against the specified bump. They differ because the power rule scores the conditional test of the group near $x_0$, whose membership and degrees of freedom change with the candidate, whereas the KL rule scores all observations of the augmented design. When the hyperparameters assumed for choosing are wrong, the power rule still gives $0.67$ to $0.72$ under the true model, and its paired advantage over the variance rule stays between $0.16$ and $0.22$. Here only the choice uses the wrong hyperparameters. A test that used them as well would no longer be exactly calibrated (Section~\ref{sec:discussion}). \emph{Conclusion.} An observation that reduces predictive uncertainty is, in general, not the observation that tests the model. The prospective-power choice agrees with classical discrimination where it should, and it degrades gracefully when the lengthscale or noise level are misspecified by a factor of two.

\begin{figure}[!htbp]
\centering
\includegraphics[width=\linewidth]{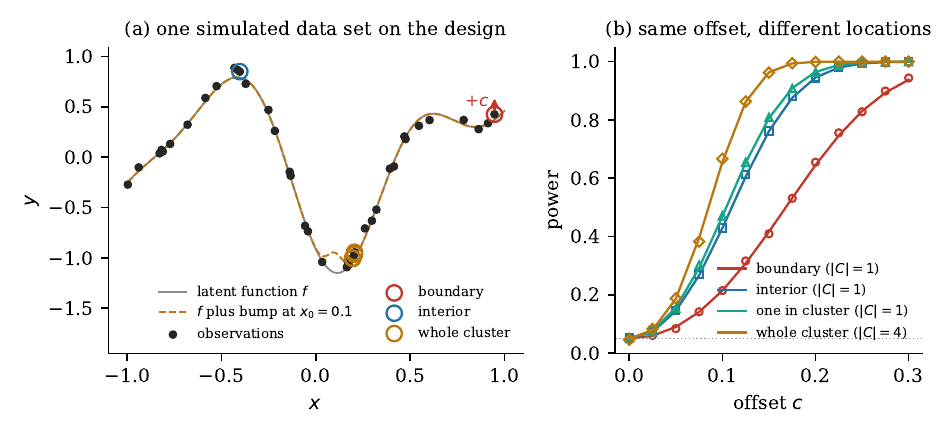}
\caption{E1, design-dependent power. (a) One data set simulated from the GP null on the design used in (b) and in Figure~\ref{fig:teaser}(b), with the latent function (solid), the latent function plus the bump at $x_0=0.1$ (dashed), and the observations without discrepancy (dots). Circles mark the boundary observation, an interior observation and the cluster. The arrow shows the offset $c=0.2$, four noise standard deviations, added at the boundary observation. (b) Verification: power of the group test against a common offset $c$ at four kinds of group in this design. Lines show the power predicted by Proposition~\ref{prop:power}, markers the rejection rate in $4000$ simulated data sets. The dotted line marks the level $0.05$. The candidate scores for the bump at $x_0=0.1$ are shown in Figure~\ref{fig:teaser}(b).}\label{fig:power}
\end{figure}

\begin{table}[!htbp]
\centering
\small
\begin{tabular}{lrrrr}
\toprule
Assumed when choosing & Power rule $\uparrow$ & KL rule $\uparrow$ & Variance rule $\uparrow$ & Power $-$ variance $\uparrow$\\
\midrule
True hyperparameters & 0.72 & 0.72 & 0.51 & 0.217 $\pm$ 0.008\\
Lengthscale $\times0.5$ & 0.67 & 0.67 & 0.51 & 0.156 $\pm$ 0.008\\
Lengthscale $\times2$ & 0.71 & 0.70 & 0.50 & 0.208 $\pm$ 0.008\\
Noise s.d.\ $\times0.5$ & 0.72 & 0.71 & 0.50 & 0.217 $\pm$ 0.008\\
Noise s.d.\ $\times2$ & 0.72 & 0.72 & 0.50 & 0.218 $\pm$ 0.008\\
\bottomrule
\end{tabular}
\caption{E1, power of the group test against a bump of specified location after one added observation, averaged over $20$ designs and $19$ bump locations. The rules choose under the hyperparameters in the first column, and every choice is evaluated under the true model (lengthscale $0.3$, noise variance $0.0025$). The current design has mean power $0.48$. The KL rule maximises the Kullback--Leibler divergence between the alternative and the null for all observations, and chooses the same input as the power rule in $77\%$ of the cases with the true hyperparameters. The last column is the paired difference between the power and variance rules, averaged per design, with one standard error over the $20$ designs.}\label{tab:e1baselines}
\end{table}

\subsection{Discrepancies of unknown location (E2)}\label{sec:e2}
\emph{Question.} Does prospective power still help when the location of the discrepancy is unknown? \emph{Set-up.} The null model and the $20$ starting designs are those of E1. The alternative is a family of bumps of height $0.2$ and width $0.04$ whose centre lies in $[0.3,0.9]$, with prior weights from a normal density centred at $0.6$, and the test is the group test of all observations in that region. Five observations are added one at a time from $61$ candidates in the region, either by maximising the prior-weighted mean of the rejection probabilities over the family (\emph{average power}) or their minimum (\emph{worst case}), or by one of five baselines: the prior-weighted mean Kullback--Leibler divergence (\emph{KL}), largest predictive variance, largest prior-weighted squared bump height (\emph{amplitude}), equally spaced inputs (\emph{space-filling}) and random inputs. The additions do not depend on any response, since the power depends only on the design. Averaging the power keeps the criterion tied to the test. A Bayesian design would instead maximise an expected information gain \cite{chaloner1995}.

\emph{Results.} After five observations the average-power rule reaches a mean power of $0.52$, the best of the six rules in each of the $20$ designs (Figure~\ref{fig:unknown}(a)). The strongest baseline is space-filling, at $0.46$, and the average-power rule exceeds it by $0.059\pm0.005$, paired over designs (one standard error). The KL rule reaches $0.520$, against $0.523$, so the two criteria choose nearly the same designs.

Two limits remain. The power against the least detectable bump stays at most $0.16$ for every rule (Figure~\ref{fig:unknown}(b)), and Proposition~\ref{prop:power} shows this before any observation is made. When the true bump differs from the assumed one (Table~\ref{tab:e2sens}), the average-power rule stays the best, but its gain over space-filling falls from $0.03$--$0.06$ for narrower or higher bumps to $0.01$--$0.02$ for wider or lower ones. \emph{Conclusion.} Prospective power remains useful within a specified family of alternatives, it agrees with a classical discrimination criterion, and its advantage shrinks as the alternative is misspecified.

\begin{figure}[!htbp]
\centering
\includegraphics[width=\linewidth]{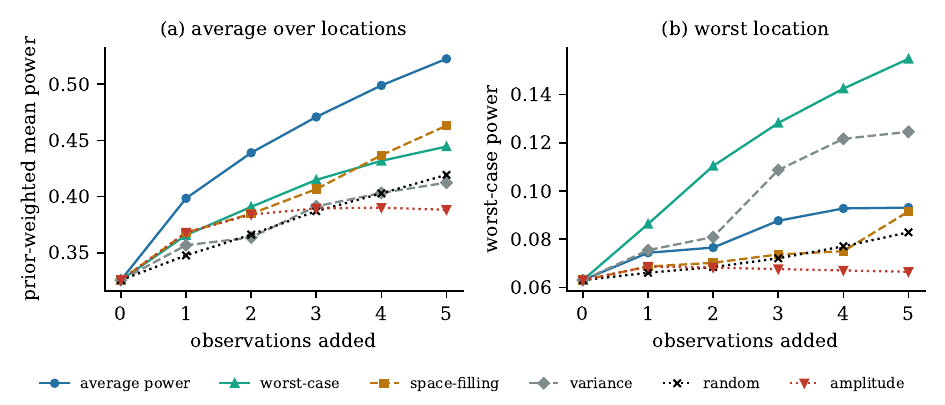}
\caption{E2, choosing five observations against a bump of unknown location in $[0.3,0.9]$, averaged over $20$ designs. (a) Prior-weighted mean power over the family of bump locations. (b) Power against the least detectable bump. The average-power and worst-case rules use Proposition~\ref{prop:power}. The other rules are baselines described in the text. The KL rule is not shown, since its curve is indistinguishable from that of the average-power rule.}\label{fig:unknown}
\end{figure}

\begin{table}[!htbp]
\centering
\small
\resizebox{\linewidth}{!}{\begin{tabular}{lrrrr}
\toprule
True discrepancy & Average power $\uparrow$ & Gain over space-filling $\uparrow$ & Gain over variance $\uparrow$ & Gain over random $\uparrow$\\
\midrule
as assumed & 0.52 & 0.059 $\pm$ 0.005 & 0.110 $\pm$ 0.009 & 0.102 $\pm$ 0.007\\
narrower bump (width 0.02) & 0.44 & 0.057 $\pm$ 0.004 & 0.111 $\pm$ 0.006 & 0.081 $\pm$ 0.004\\
wider bump (width 0.08) & 0.24 & 0.017 $\pm$ 0.002 & 0.025 $\pm$ 0.003 & 0.029 $\pm$ 0.003\\
lower bump (height 0.1) & 0.14 & 0.011 $\pm$ 0.001 & 0.022 $\pm$ 0.002 & 0.019 $\pm$ 0.001\\
higher bump (height 0.3) & 0.87 & 0.062 $\pm$ 0.010 & 0.112 $\pm$ 0.017 & 0.125 $\pm$ 0.017\\
uniform location prior & 0.46 & 0.034 $\pm$ 0.004 & 0.060 $\pm$ 0.007 & 0.057 $\pm$ 0.005\\
\bottomrule
\end{tabular}}
\caption{E2, sensitivity to a misspecified alternative. The designs were chosen for bumps of height $0.2$ and width $0.04$ with a normal location prior centred at $0.6$, and are evaluated here under the true discrepancy in the first column, without re-selection. The second column is the prior-weighted mean power of the average-power rule after five additions. The other columns are paired differences in mean power between the average-power rule and each baseline, with one standard error over the $20$ starting designs.}\label{tab:e2sens}
\end{table}

\subsection{Testing additivity in two dimensions (E3)}\label{sec:e3}
\emph{Question.} Which two-dimensional designs can test additivity? \emph{Set-up.} The null model is an additive GP on $[-1,1]^2$, with kernel $k(x,x')=k_1(x_1,x_1')+k_2(x_2,x_2')$, where $k_1$ and $k_2$ are RBF kernels with lengthscale $0.5$ and unit output scale, and noise variance $\sigma_0^2=0.0025$. Although both components have full rank, their sum imposes exact relations (Section~\ref{sec:gp}), so this is a GP with genuine exact structure that is not a polynomial regression. The alternative is an interaction $c\,x_1x_2$. We compare designs of $16$ inputs: a $4\times4$ grid, Latin hypercube samples and uniformly random inputs ($50$ draws each), and a design chosen greedily from a $21\times21$ candidate grid by the noncentrality of the GP test against $x_1x_2$.

\emph{Results.} The grid carries the $(4-1)(4-1)=9$ mixed-difference relations \eqref{eq:mixed}. They are spanned by the mixed differences of its $2\times2$ subgrids, and each of these is a circuit of four observations: any assignment of values on three of the four inputs can be matched by an additive function, whereas the values on all four must satisfy the mixed-difference relation. The GP test of the grid detects $c=0.1$ with probability $0.81$, all of it structurally (Figure~\ref{fig:additivity}). A Latin hypercube has no two inputs that share a coordinate, so for every set $C$ of its inputs $K_{CC}$ is non-singular and no set supports a circuit. The Latin hypercube and random designs detect the same interaction with probability $0.09$ and $0.08$ on average, through the prior and soft prior restrictions only. The power-optimised design places its inputs on the two lines $x_2=\pm1$, which forms an $8\times2$ grid with $7$ relations, and it detects $c=0.1$ with probability $0.999$. On the grid and on the power-optimised design, simulated rejection rates agree with the prediction to within $0.002$. \emph{Conclusion.} A design whose inputs share coordinates contains circuits that test additivity, which a Latin hypercube lacks, and a design chosen for power finds such circuits by itself, at the price of being tailored to the interaction it was chosen for.

\begin{figure}[!htbp]
\centering
\includegraphics[width=\linewidth]{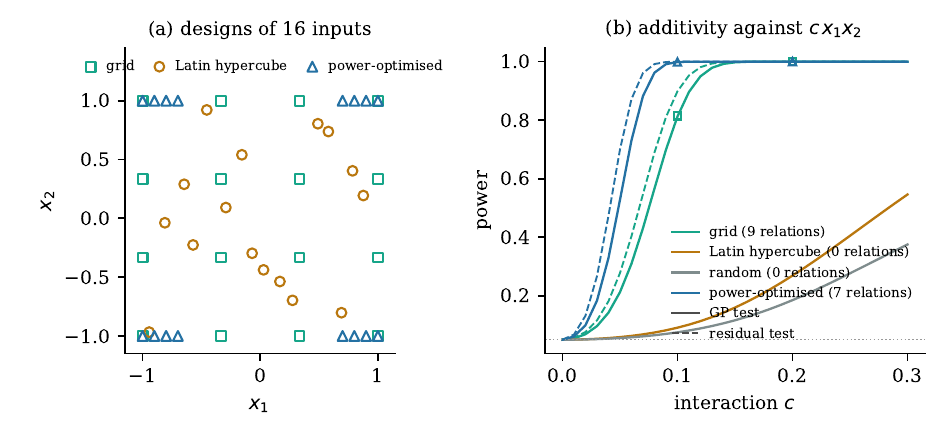}
\caption{E3, testing an additive GP against an interaction $c\,x_1x_2$ with $16$ inputs. (a) The grid, one Latin hypercube draw and the power-optimised design. (b) Power of the GP test (solid) and of the exact residual test on the null space of $K$ (dashed), with the number of exact consistency relations of each design. Latin hypercube and random curves are means over $50$ draws and have no residual test. Markers show simulated rejection rates of the GP test.}\label{fig:additivity}
\end{figure}

\section{Discussion}\label{sec:discussion}
\subsection{What the experiments show}
Experiment E1 exposes a distinction that is easy to miss when power is treated as a single number. At the end-point design, the GP can reject a sufficiently large curvature even though the observations contain no structural information about curvature. The curvature is representable there as an intercept shift, and the rejection comes from that shift being improbable under the prior. Moving only four observations to the centre changes the nature of the evidence: curvature then violates an exact consistency relation.

The same issue appears in two dimensions. The $4\times4$ grid in E3 contains mixed-difference circuits that test additivity directly, while a Latin hypercube of the same size contains none. Both designs may still assign low probability to an interaction under the GP prior, but only the grid contains an exact algebraic check of additivity. This is why a scalar discrimination score can be insufficient for interpretation: the full-GP KL criterion can be large even when the projected KL and T-optimality criteria are zero.

The prospective power calculation itself behaved as expected. Proposition~\ref{prop:power} tracked the simulated rejection probabilities closely. In E1, choosing the next observation by predicted power increased average power from $0.48$ to $0.72$, compared with $0.51$ for predictive variance, and most of the advantage remained when the lengthscale or noise level used for choosing was wrong by a factor of two. Where the problem reduces to classical discrimination, as in E1 and E2, the resulting choices agree closely with those criteria.

\subsection{Limits of the present analysis}
The exact calibration assumes a fixed mean, kernel and noise variance with Gaussian homoscedastic noise. E1 suggests that the \emph{choice} of design is not overly fragile to moderate hyperparameter misspecification, but the nominal test level is no longer exact when those hyperparameters are estimated from the observations being tested. Section~\ref{sec:noise} gives one concrete failure mode: discrepancy energy can inflate the fitted noise variance and weaken the test.

The power calculation also requires a specified discrepancy or family of discrepancies. This is useful when a scientifically meaningful failure mode is known, but it can lead to designs that are deliberately specialised. The optimised design in E3 is a clear example. E2 shows a milder gain when the discrepancy location is unknown.

Two further restrictions are structural. The present results concern a fixed design tested once; repeated testing as observations arrive requires control of false rejections over time. They also concern linear consistency relations. Rank constraints and other algebraic hypotheses lead to determinantal or more general non-linear relations and need a different treatment. Finally, all experiments here are synthetic and low-dimensional. A rejection should therefore be read only as incompatibility with the specified model, not as identification of a faulty observation or assumption. Model repair---through a robust likelihood \cite{jylanki2011,kersting2007,ament2024}, an explicit discrepancy term \cite{kennedy2001}, or a physically informed mean \cite{deboi2022,ramosgarces2024}---is a separate decision.

\subsection{Implications for sequential design}
A model can fit the data because it is adequate or because the design never challenged it \cite{popper1959,box1980}. Testability is also different from usefulness for the optimisation task. In Bayesian optimisation, an evaluation used to challenge the model is an evaluation not used directly to improve the objective \cite{srinivas2010,bogunovic2021}. The distinction can be summarised as
\begin{equation}
 \boxed{\text{fit}\ \neq\ \text{testability}\ \neq\ \text{decision value}}.
\end{equation}

The results here make the middle term computable for a fixed design. Extending this to a sequence of decisions raises several separate problems. Repeated tests need time-uniform error control, for example through confidence sequences \cite{howard2021}. Non-linear structural hypotheses, such as low-rank interactions, replace null-space relations by algebraic varieties; for a Segre-type rank model, distances to the relevant secant variety and its Euclidean distance degree \cite{michalek2021,draisma2016} are natural analogues of the projector used here. An optimisation algorithm must also decide whether the next evaluation is worth spending on testing rather than on improvement. Finally, uncertainty in the discrepancy family and in the GP hyperparameters must be propagated into that decision rather than treated as fixed. These questions are deliberately left open here.

\section{Conclusion}
A GP can reject a departure even when the design never observed the structure that was supposedly violated. The end-point example makes this concrete: curvature can be rejected through an improbable intercept although no consistency relation of that design involves curvature.

For finite-rank kernels, the null space separates these cases exactly. Gale duality then shows which observations participate in the structural relations and which minimal subsets can witness a contradiction. For general kernels, the same question becomes spectral rather than algebraic. In both cases the calculation is available before the responses are measured, so testability can be inspected as a property of the design rather than inferred only after a failed model check.

What remains is the sequential problem. During optimisation the design, the GP and the structural hypothesis all evolve. Deciding when a new evaluation should improve the objective and when it should challenge the model requires error control, uncertainty over the assumed structure and a decision rule that trades testing against optimisation.

\section*{Funding}
Ivan De Boi is funded by the Research Foundation -- Flanders (FWO)
through postdoctoral fellowship 1217125N.

\section*{Data availability}
The data supporting the findings of this study are available upon
reasonable request from the corresponding author.

\appendix
\section{Block deletion identities}\label{app:block}
Order the indices as $(R,C)$ and write $A_{RR}$, $A_{RC}$ and $A_{CC}$ for the blocks of $A$. The Schur complement of $A_{RR}$ is $V_C=A_{CC}-A_{RC}^\top A_{RR}^{-1}A_{RC}\in\R^{|C|\times|C|}$, and
\begin{equation}
 A^{-1}=\begin{pmatrix}
 A_{RR}^{-1}+A_{RR}^{-1}A_{RC}V_C^{-1}A_{RC}^\top A_{RR}^{-1}&-A_{RR}^{-1}A_{RC}V_C^{-1}\\
 -V_C^{-1}A_{RC}^\top A_{RR}^{-1}&V_C^{-1}
 \end{pmatrix}.
\end{equation}
Hence $Q_{CC}=V_C^{-1}$, and reading off the $C$ block of $\alpha=A^{-1}y$ gives $\alpha_C=V_C^{-1}(y_C-A_{RC}^\top A_{RR}^{-1}y_R)=Q_{CC}z_C$, which establishes \eqref{eq:conditional} and $T_C=\alpha_C^\top Q_{CC}^{-1}\alpha_C$. Since $e=Sy=\sigma^2\alpha$ and $S_{CC}=\sigma^2Q_{CC}$, also $T_C=e_C^\top S_{CC}^{-1}e_C/\sigma^2$. Expanding the quadratic form,
\begin{equation}
 y^\top A^{-1}y=y_R^\top A_{RR}^{-1}y_R+z_C^\top V_C^{-1}z_C=y_R^\top A_{RR}^{-1}y_R+T_C,
\end{equation}
so $T_C$ is the drop in the quadratic term of the log marginal likelihood when $C$ is deleted, and $\det A=\det A_{RR}\det V_C$ gives the corresponding change in the log determinant.

\section{Proofs}\label{app:proofs}
\begin{proof}[Proof of Proposition~\ref{prop:power}]
The discrepancy shifts the conditional mean of $y_C$ given $y_R$ by $\delta_C$ and its prediction $A_{CR}A_{RR}^{-1}y_R$ by $A_{CR}A_{RR}^{-1}\delta_R$, so the mean of $z_C$ in \eqref{eq:conditional} becomes $\Delta_C$, while its covariance $V_C=Q_{CC}^{-1}$ is unchanged. Hence $V_C^{-1/2}z_C\mid y_R\sim\mathcal N(V_C^{-1/2}\Delta_C,I_{|C|})$, and $T_C=\|V_C^{-1/2}z_C\|^2$ is noncentral chi-square with noncentrality $\Delta_C^\top V_C^{-1}\Delta_C=\Delta_C^\top Q_{CC}\Delta_C$. The block inverse in Appendix~\ref{app:block} gives $(Q\delta)_C=Q_{CC}\Delta_C$, which yields the second form. For the geometric form \eqref{eq:noncentralsoft}, $Q=BB^\top/\sigma_0^2$ gives $(Q\delta)_C=B_CB^\top\delta/\sigma_0^2$ and $Q_{CC}=B_CB_C^\top/\sigma_0^2$, so the second form equals $\delta^\top BB_C^\top(B_CB_C^\top)^{-1}B_CB^\top\delta/\sigma_0^2=\|\Pi_CB^\top\delta\|^2/\sigma_0^2$. If $\delta_R=0$, then $\Delta_C=\delta_C$, and $\delta_C^\top Q_{CC}\delta_C=\|B_C^\top\delta_C\|^2/\sigma_0^2$.
\end{proof}
\begin{proof}[Proof of Proposition~\ref{prop:family}]
By linearity, $(Q\delta)_C=Q_{C:}D\theta$, and \eqref{eq:noncentralgeneral} gives $\eta_C(\theta)=\theta^\top M_C\theta$. The minimum over unit $\theta$ is the smallest eigenvalue, and the mean of a quadratic form in $\mathcal N(0,\tau^2I_p)$ is $\tau^2\tr M_C$. For $C=\{1,\ldots,n\}$, $Q_{:C}Q_{CC}^{-1}Q_{C:}=Q$, and $\sigma_0^2Q\to P_N=GG^\top$ as $\sigma_0^2\to0$ by Lemma~\ref{prop:limit}. Finally, $G^\top D\theta=0$ exactly when $D\theta\in\col(\Phi)$.
\end{proof}

\section{The noise fixed point}\label{app:noise}
For fixed $K$, the log marginal likelihood as a function of the noise variance is
\begin{equation}\label{eq:evidence}
 L(\sigma^2)=-\frac12y^\top Q(\sigma^2)y-\frac12\log\det A(\sigma^2)-\frac n2\log(2\pi),
\end{equation}
with derivative $L'(\sigma^2)=\frac12y^\top Q^2y-\frac12\tr Q$. Setting $L'(\hat\sigma^2)=0$, multiplying by $2\hat\sigma^4$ and using $\hat\sigma^2\hat Qy=\hat e$ and $\hat\sigma^2\tr\hat Q=\tr\hat S$ gives \eqref{eq:fixedpoint}. The second form follows from $\hat S=I-K\hat Q$. Both sides depend on $\hat\sigma^2$, so \eqref{eq:fixedpoint} is a stationarity identity rather than a closed-form estimator. It holds for the maximum of the marginal likelihood. The experiments of Appendix~\ref{sec:detectionandforgetting} fit the noise by maximum a posteriori estimation with a weak prior, for which it holds only approximately.

Now let $y=f+\delta+\varepsilon$, where $f\in\R^n$ is represented well by the model, $\delta\in\R^n$ is a deterministic discrepancy and $\varepsilon\sim\mathcal N(0,\sigma_0^2I)$. At any fixed value of $\sigma^2$,
\begin{equation}\label{eq:expected}
 \mathbb E\|Sy\|^2=\|S(f+\delta)\|^2+\sigma_0^2\tr(S^2).
\end{equation}
A discrepancy with energy along hard relations or soft prior restrictions passes almost unattenuated into the residual. At a fixed noise level it increases the expected numerator of \eqref{eq:fixedpoint} while leaving the denominator essentially unchanged, so it can raise $\hat\sigma^2$ above $\sigma_0^2$. This is not a general monotone result. The effect depends on the direction of the discrepancy, on its cross term with $f$, and on how $S$ and the denominator change with the fitted variance.

\section{Experimental details}\label{app:details}
\paragraph{E1.} The null model is a zero-mean GP with kernel $k(x,x')=\exp(-(x-x')^2/2\ell^2)$, $\ell=0.3$, and noise variance $\sigma_0^2=0.0025$. A design consists of $n=40$ inputs in $[-1,1]$, of which $36$ are drawn uniformly and $4$ lie in a tight cluster at $0.2\pm0.01$. The check of Figure~\ref{fig:power}(b) adds a common offset $c\in[0,0.3]$, up to six noise standard deviations, to four kinds of group in one design: the observation at the right boundary, an interior observation near $x=-0.4$, one member of the cluster, and the whole cluster. The predicted power is compared with the rejection rate in $4000$ data sets $y\sim\mathcal N(\delta,A)$ for each offset. For the bump, which is small but not zero outside the tested group, the power is computed from the general form \eqref{eq:noncentralgeneral}. The candidates are $401$ equally spaced inputs in $[-1,1]$, and one observation is added. A candidate joins the tested group when it lies within $0.1$ of $x_0$, so the degrees of freedom and the rejection threshold are recomputed for each candidate, and candidates are scored by power rather than by noncentrality. The predictive variance is that of the latent function, and the random choice is the mean power over all candidates. The detailed example uses $x_0=0.1$, and the sweep combines $20$ designs with $x_0\in\{-0.9,-0.8,\ldots,0.9\}$. For the mismatch check, the rules choose with lengthscale $0.15$ or $0.6$, or with noise variance $6.25\times10^{-4}$ or $0.01$, and each chosen design is evaluated, and tested, with the true hyperparameters. The straight-line part uses the kernel $1+xx'$, curvatures $c\leq0.25$ for all three designs and $c\leq10$ for the end-point design. The residual test has $n-r=38$ degrees of freedom at all three designs.

\paragraph{E2.} The bump centres lie on a grid of $31$ points in $[0.3,0.9]$, with prior weights proportional to a normal density with mean $0.6$ and standard deviation $0.15$. The tested group contains all observations in the region, including those that will be added. The random baseline is averaged over $10$ draws. The KL rule maximises the prior-weighted mean of $\frac12\delta^\top Q\delta$ over the family, for all observations of the augmented design. For the final design of the average-power rule in the first case, simulated rejection rates at five bump locations agree with the predicted power to within $0.01$. The sensitivity scenarios of Table~\ref{tab:e2sens} keep the designs and change the width, height or location prior of the true bump.

\paragraph{E3.} The Latin hypercube samples place one input in each of $16$ equal strips of each coordinate. The greedy design starts from the empty design and adds, at each step, the candidate that maximises the noncentrality of the test of all observations against $x_1x_2$, without repeating a candidate. An eigenvalue of $K$ is treated as zero when it is below $10^{-12}$ times the largest. The smallest eigenvalue of $K$ over all Latin hypercube and random draws is about $2\times10^{-11}$ times the largest, far below $\sigma_0^2$, so these designs do have soft prior restrictions, but the interaction has little energy along them.

\paragraph{Computation.} Every power evaluation solves a linear system of size $n$, and choosing an observation repeats this for every candidate. This is cheap for the designs used here, with at most $45$ observations, but not for large designs without low-rank or incremental updates.

\section{Background material}\label{sec:background}
This appendix collects background material that the main text shortened.

\paragraph{The F2I objective.} Forgetting to Improve (F2I) \cite{wendl2026} selects a retained set $R$ of observations that minimises the predictive variance over a finite target set $\mathcal A\subset\mathcal X$,
\begin{equation}\label{eq:f2i}
 J(R)=\sum_{x\in\mathcal A}\left[k(x,x)-k_{Rx}^\top(K_{RR}+\hat\sigma_R^2I_{|R|})^{-1}k_{Rx}+\hat\sigma_R^2\right],
\end{equation}
where $k_{Rx}$ has entries $k(x_i,x)$ for $i\in R$, $K_{RR}$ is the kernel matrix of the retained inputs, and $\hat\sigma_R^2$ is the noise variance refitted on them. By the noise fixed point (\ref{eq:fixedpoint}), every observation with a large soft residual raises the fitted noise, including honest observations whose residuals are large by chance. An objective that rewards a small $\hat\sigma_R^2$ therefore also rewards removing the upper tail of the honest noise distribution, which leads to overconfidence. A smaller predicted variance is not evidence that the model has become more correct.

\paragraph{Deletion at fixed noise.} Deleting observations interacts with this mechanism. For a prediction input $x\in\mathcal X$, let $k_{Cx}\in\R^{|C|}$ have entries $k(x_i,x)$ for $i\in C$, let $d_C(x)=k_{Cx}-A_{CR}A_{RR}^{-1}k_{Rx}\in\R^{|C|}$ be the conditional covariance between $f(x)$ and $y_C$ given $y_R$, with $V_C$ as in (\ref{eq:conditional}), and let $v_R(x)$ denote the posterior variance of $f(x)$ given $y_R$, all at the same noise variance $\sigma^2$.
\begin{proposition}\label{prop:variance}
The latent posterior variances satisfy
\begin{equation}\label{eq:variancecost}
 v_R(x)-v_{R\cup C}(x)=d_C(x)^\top V_C^{-1}d_C(x)\geq0.
\end{equation}
\end{proposition}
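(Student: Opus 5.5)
The plan is to obtain \eqref{eq:variancecost} as an instance of sequential Gaussian conditioning, the same Schur-complement argument that gave \eqref{eq:conditional}. I would work with the joint Gaussian vector $(f(x),y_R,y_C)$ under the GP null at the fixed noise variance $\sigma^2$. Its covariance has blocks $k(x,x)$, $k_{Rx}$, $k_{Cx}$, $A_{RR}$, $A_{RC}$ and $A_{CC}$; the cross-covariance of $f(x)$ with $y_i$ is $k(x_i,x)$ because the noise is independent of $f$.

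The first step is to condition on $y_R$ alone. This gives the posterior $f(x)\mid y_R$ with variance $v_R(x)=k(x,x)-k_{Rx}^\top A_{RR}^{-1}k_{Rx}$. The same conditioning gives $y_C\mid y_R$ with covariance $V_C=A_{CC}-A_{CR}A_{RR}^{-1}A_{RC}$, exactly as in Appendix~\ref{app:block}. It also gives the conditional cross-covariance
\[
\operatorname{Cov}(y_C,f(x)\mid y_R)=k_{Cx}-A_{CR}A_{RR}^{-1}k_{Rx}=d_C(x).
\]
The second step uses the fact that conditioning on $y_{R\cup C}$ is the same as first conditioning on $y_R$ and then conditioning that conditional Gaussian on $y_C$. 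Applying the scalar Gaussian conditioning formula within the $y_R$-conditional law gives
\[
v_{R\cup C}(x)=v_R(x)-d_C(x)^\top V_C^{-1}d_C(x).
\]
This is the identity. As an alternative, one can verify it algebraically by writing $v_{R\cup C}(x)=k(x,x)-k_{(R,C)x}^\top A^{-1}k_{(R,C)x}$, inserting the block inverse of Appendix~\ref{app:block}, and collecting terms into the quadratic form in $d_C(x)$.

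Nonnegativity then follows because $V_C$ is a Schur complement of the positive definite $A$ (recall $\sigma^2>0$). Hence $V_C$ is positive definite, so $V_C^{-1}$ is positive definite, and the quadratic form is $\geq0$, with equality exactly when $d_C(x)=0$.

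The only step needing care is justifying the two-stage conditioning, or equivalently the bookkeeping in the block-inverse expansion. I would take the algebraic route for rigour. Expanding $k^\top A^{-1}k$ with the block formula gives $k_{Rx}^\top A_{RR}^{-1}k_{Rx}$ plus a term of the form $(A_{RR}^{-1}A_{RC}V_C^{-1}A_{RC}^\top A_{RR}^{-1})$ sandwiched by $k_{Rx}$, and two cross terms. These combine into $d_C(x)^\top V_C^{-1}d_C(x)$ by completing the square. The check is routine but must be done carefully.
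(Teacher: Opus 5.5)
Your proposal is correct and follows the paper's proof: condition on $y_R$ first, then on $y_C$ within that conditional law, so that the update subtracts $d_C(x)^\top V_C^{-1}d_C(x)$; nonnegativity then follows from $V_C$ being positive definite because $\sigma^2>0$. The block-inverse expansion you offer as a rigorous check is the same Schur-complement identity written out, and it does complete the square into $d_C(x)^\top V_C^{-1}d_C(x)$ as you describe.
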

\begin{proof}Conditioning first on $y_R$ and then on $y_C$, Gaussian conditioning on $y_C$ subtracts the displayed quadratic form from the variance. The matrix $V_C$ is positive definite because $\sigma^2>0$.\end{proof}
At fixed kernel and noise, deleting data therefore never reduces uncertainty. Any improvement after deletion must come from re-estimating model quantities, here the noise variance, rather than from the deletion itself. Deleting observations that carry discrepancy energy removes more from the numerator of (\ref{eq:fixedpoint}) than from its denominator, and so tends to lower $\hat\sigma^2$.

This explains both the appeal and the risk of deleting data to reduce uncertainty, as in F2I. Every observation with a large soft residual raises the fitted noise, including unperturbed observations whose residuals are large by chance, and the two cannot be told apart one observation at a time. A rejection says only that a group of observations is incompatible with the specified model. It does not show that these observations are wrong, and when they reflect real behaviour of the function, revising the model is preferable to deleting them. Appendix~\ref{sec:detectionandforgetting} shows that a greedy deletion rule of our own, which is not the published F2I implementation, becomes overconfident in this way (Tables~\ref{tab:s1} and~\ref{tab:s2}).

\paragraph{Masking.} Closely spaced observations with a common bias predict each other well when each is held out alone. The other members of the group pull each leave-one-out prediction towards the biased value, and every individual residual is small. Holding out the whole group removes this mutual support, and the group residual $z_C$ of (\ref{eq:conditional}) reveals the shared offset. This is the masking problem of classical multiple-outlier diagnostics \cite{beckman1983,hawkins1984,hadi1993}. In the language of (\ref{eq:group}), masking is a cancellation within $S_{CC}$. The individual statistic of observation $i$ is $\alpha_i^2/Q_{ii}$, and under a common offset $c$ on $C$ the mean of $\alpha_i$ is $c\sum_{j\in C}Q_{ij}$. For nearby observations the off-diagonal entries $Q_{ij}$, and hence the inner products $b_i^\top b_j=S_{ij}$ of their spectral coordinates, are typically negative, so this sum is small and the offset largely cancels in every individual statistic. The joint statistic $T_C$ uses the whole block $Q_{CC}$ and does not suffer this cancellation. The remedy is therefore the joint hypothesis, and it is available without any change of coordinates.

\paragraph{Several candidate groups.} With several candidate groups, a Bonferroni or Holm correction \cite{holm1979} controls the probability of any false rejection for a fixed family. Adaptive remove-and-refit searches, such as those of S1--S3, require further care \cite{chen2020}, which is why their calibration is checked empirically below.

\paragraph{Exact and truncated coordinates.} The last form of (\ref{eq:group}) shows that the group test computed in spectral coordinates is identical to the classical multiple-case test. A difference between the two can arise only from approximating the residual matrix, for instance by a hard truncation of the spectrum of $K$. The method ``Group test, Gale coordinates'' below measures what such a truncation costs.

\paragraph{Soft relations as stencils.} For smooth stationary kernels and closely spaced inputs, the eigenvectors of $K$ with small eigenvalues typically oscillate across neighbouring inputs, so the soft relations behave like approximate higher-order finite differences. The first eigenvector below the noise level in Figure~\ref{fig:spectrum} alternates in sign between neighbouring inputs. A row of the residual matrix $S$ is a local stencil: it compares $y_i$ with a weighted average of its neighbours, much like the straight-line relation, and it nearly annihilates constant, linear and quadratic trends.

\paragraph{A classical use of Gale duality.} Properties of a point configuration can be read off from its Gale transform. For example, a subset of the points is the set of points lying on some face of their convex hull exactly when the origin lies in the relative interior of the convex hull of the Gale vectors of the \emph{remaining} points. This makes Gale duality particularly effective for configurations with only a few more points than dimensions, where the dual space is small.

\paragraph{Affine and projective Veronese maps.} The feature map (\ref{eq:veronese}) of all monomials of degree at most $q$ is the affine form of the Veronese map \cite{michalek2021}. The classical Veronese map is projective and uses the monomials of degree exactly $q$ in homogeneous coordinates $(x_0,x_1,\ldots,x_d)$. Setting $x_0=1$ turns these into the monomials of degree at most $q$ in $(x_1,\ldots,x_d)$, so the two descriptions give the same consistency relations.

\section{Supplementary experiments: detection and forgetting}\label{sec:detectionandforgetting}
This appendix reports a simulation study of detection and forgetting, with three supplementary experiments numbered S1--S3.

Experiments S1--S3 turn from testability to detection and forgetting. They check the claims of Section~\ref{sec:noise} that a local discrepancy inflates the fitted noise everywhere and that uncertainty-minimising deletion removes this inflation at the price of overconfidence, and they compare this deletion rule with the test-based rules of Section~\ref{sec:tests}. The three experiments share one data model, pipeline and set of methods, and differ only in the kind of discrepancy: a few shifted observations (S1), a region that the kernel genuinely cannot represent (S2), and a batch of shifted observations that mask one another (S3).
\paragraph{Data.} Experiments S1, S2 and S3 use $n=40$ training inputs in $[-1,1]$, Gaussian measurement noise with variance $\sigma_0^2=0.0025$, the target region $\mathcal A=[0.3,0.9]$, and $200$ held-out test inputs drawn uniformly in $\mathcal A$. Each configuration is repeated for $20$ random seeds.
\begin{itemize}
\item \emph{S1, structured function with a cluster.} The latent function is the cubic $f(x)=0.5+0.8x-1.2x^2+0.9x^3$, with inputs drawn uniformly. The three observations nearest to $x=-0.1$, just outside $\mathcal A$, are shifted by $+0.4$, which is eight noise standard deviations.
\item \emph{S2, genuine model mismatch.} The latent function is $f(x)=\sin3x+1.2\sin(35x)$ for $x<-0.3$ and $f(x)=\sin3x$ for $x\geq-0.3$. On $x<-0.3$ it oscillates far faster than an RBF kernel with a single lengthscale can follow. The roughly $13$ observations with $x<-0.3$ are correct measurements of $f$, but they are counted as misfit observations because the kernel cannot represent them.
\item \emph{S3, a masked batch.} The latent function is the cubic of S1. There are $40-k$ inputs drawn uniformly and a batch of $k$ inputs at $x=0.2\pm0.01$, just outside $\mathcal A$, whose observations are all shifted by $+0.3$. This mimics a mis-calibrated sensor during a run of repeated measurements. We vary $k\in\{2,3,4,5\}$. A variant places the batch inside the target region, at $x=0.5\pm0.01$.
\end{itemize}

\paragraph{Pipeline.} The pipeline is identical for every method, so that differences between methods are attributable to the deletion rule alone. (i) The lengthscale $\ell$, output scale $s$ and noise variance $\sigma^2$ of the RBF kernel $k(x,x')=s\exp(-(x-x')^2/2\ell^2)$ are fitted by maximum a posteriori (MAP) estimation on all $40$ observations, with a log-normal prior $\log\ell\sim\mathcal N(0,1)$ restricted to $\ell\geq0.2$, a flat prior on $\log s$, and a Gamma prior with shape $2$ and rate $0.15$ on $\sigma^2$. The lower bound on $\ell$ expresses the modelling assumption of a smooth latent function. Without it, the fit in S2 selects $\ell\approx0.05$ in most seeds, chases the fast oscillation, and generalises poorly in $\mathcal A$ for every method. (ii) The method returns the observations to keep. (iii) The noise variance alone is refitted by MAP on the kept observations, with the kernel fixed. (iv) The GP predicts at the test inputs.

\paragraph{Methods.} The tables compare six methods, named as in the table rows. The first two are references that bracket the comparison, and the other four are deletion rules.
\begin{itemize}
\item \emph{Full GP.} Keep every observation. This shows the effect of the misfit observations when nothing is done about them.
\item \emph{Oracle.} Delete exactly the planted misfit observations and nothing else. No real method can know which observations these are, so the oracle is a best-case reference rather than a competitor.
\item \emph{F2I-style.} Greedy deletion on the objective (\ref{eq:f2i}), the total predictive variance over a grid of $30$ inputs in $\mathcal A$. At every step each single deletion is evaluated with an exact refit of the noise variance, the best one is applied, and the search stops when no deletion lowers (\ref{eq:f2i}). This uses the published objective \cite{wendl2026} with exact refits instead of first-order influence approximations, and is not the authors' implementation.
\item \emph{Single-observation test.} Repeatedly delete the observation with the largest standardised leave-one-out residual $|\alpha_i|/\sqrt{Q_{ii}}$, recomputed after every deletion, while it exceeds the Bonferroni threshold $z_{1-0.05/(2n)}\approx3.2$, the corresponding quantile of the standard normal distribution.
\item \emph{Group test, classical.} The test (\ref{eq:group}) applied greedily. Candidate groups are all single observations and all runs of $2$ to $s_{\max}$ neighbouring inputs, with $s_{\max}=3$ in S1 and S2 and $s_{\max}=5$ in S3. At every step the candidate with the smallest $p$-value under (\ref{eq:chisq}) is deleted, and the search stops when that $p$-value exceeds $0.05/M$, where $M$ is the number of remaining candidates.
\item \emph{Group test, Gale coordinates.} The same candidates, search and stopping rule, but with the test computed in a Gale basis. The basis $\tilde G$ consists of the eigenvectors of $K$ whose eigenvalues do not exceed the reference noise, a hard-threshold approximation of the soft construction of Section~\ref{sec:gp}. With $\tilde P_N=\tilde G\tilde G^\top$ and $\tilde e=\tilde P_Ny$, a group $C$ is scored by the increase in explained test failure $\tilde e_C^\top((\tilde P_N)_{CC})^+\tilde e_C$, which is the hard-truncated Gale form of (\ref{eq:group}). With the exact soft operator in place of $\tilde P_N$, this method would coincide with the classical group test (Section~\ref{sec:tests}). It is included to measure what the truncation costs.
\end{itemize}
All three tests use the same reference noise variance: the MAP noise of the GP restricted to the training observations inside $\mathcal A$. This avoids judging failures against a noise estimate inflated by the very discrepancy under test (Section~\ref{sec:tests}), but it assumes that the model is adequate inside $\mathcal A$.

\paragraph{Metrics.} On the test inputs in $\mathcal A$ we report the mean squared error (MSE) of the predictive mean against the latent function, the empirical coverage of the $95\%$ predictive intervals for new noisy observations (ideally $0.95$), the negative log predictive density (NLPD) of those observations, and the mean predictive variance (sharpness), which is the F2I objective divided by the number of target inputs. On the training set we report the fraction of misfit observations deleted (higher is better), the fraction of honest observations deleted (lower is better), and the refitted noise variance $\hat\sigma^2$, whose true value is $\sigma_0^2=0.0025$. All numbers are produced by the code accompanying this paper.

\paragraph{Noise inflation and overconfidence.} Tables~\ref{tab:s1} and~\ref{tab:s2} confirm the mechanism of Section~\ref{sec:noise}. In S1, three shifted observations raise the fitted noise variance of the full GP to $9.8\times10^{-3}$, almost four times the true value. In S2 the rough region raises it to $0.22$, about ninety times the true value. Because the same noise enters every prediction, the intervals in the target region are far too wide, and their coverage is $1.00$ instead of $0.95$.

F2I-style deletion removes this inflation but does not stop there. It keeps only about $12$ of the $40$ observations, deletes roughly two thirds of the honest ones, and drives the fitted noise variance to about $10^{-5}$, two orders of magnitude below the truth. Its predictions are the sharpest of all methods, yet only $9$ to $14\%$ of new observations fall inside its nominal $95\%$ intervals, and its NLPD exceeds $100$, against values between $-1.5$ and $0.2$ for every other method. Figure~\ref{fig:calibration} shows this trade-off for all three experiments: F2I-style deletion wins on sharpness, the criterion it optimises, and loses on calibration. Figure~\ref{fig:examples} shows typical fits. The test-based methods delete at most $3.2\%$ of the honest observations, recover a noise variance close to the truth (with one exception, discussed below), and keep the coverage between $0.93$ and $0.97$.

S2 is the hardest case for every method. The rough region contains stretches where the fast oscillation happens to look smooth, so even the best test finds only about $80\%$ of the misfit observations. Nevertheless the remaining misfit has little influence on the target region, and the accuracy of every test-based method there is within $15\%$ of the oracle.

\begin{table}[!htbp]
\centering
\small
\resizebox{\linewidth}{!}{\begin{tabular}{lrrrrrrr}
\toprule
Method & Kept & $\hat\sigma^2$ ($\times10^{-3}$) & MSE ($\times10^{-4}$) $\downarrow$ & Coverage & NLPD $\downarrow$ & Misfit del. $\uparrow$ & Honest del. $\downarrow$\\
\midrule
Full GP & 40.0 & 9.78 & 8.8 $\pm$ 10.2 & 1.00 $\pm$ 0.01 & -1.18 & 0.00 & 0.000\\
Oracle & 37.0 & 2.60 & 5.6 $\pm$ 6.5 & 0.94 $\pm$ 0.03 & -1.46 & 1.00 & 0.000\\
F2I-style & 12.2 & 0.01 & 10.2 $\pm$ 6.7 & 0.12 $\pm$ 0.06 & 154.46 & 0.85 & 0.681\\
Single-observation test & 37.5 & 3.11 & 7.0 $\pm$ 8.6 & \textbf{0.94 $\pm$ 0.05} & -1.43 & 0.65 & 0.014\\
Group test, classical & 37.0 & \textbf{2.54} & \textbf{5.6 $\pm$ 6.5} & \textbf{0.94 $\pm$ 0.04} & \textbf{-1.45} & \textbf{0.95} & \textbf{0.003}\\
Group test, Gale coordinates & 37.2 & 2.66 & 6.6 $\pm$ 8.7 & \textbf{0.94 $\pm$ 0.05} & \textbf{-1.45} & 0.80 & 0.011\\
\bottomrule
\end{tabular}}
\caption{S1, a cubic function with a cluster of three shifted observations just outside the target region. Means over 20 seeds, $\pm$ one standard deviation across seeds. The true noise variance is $\sigma_0^2=2.5\times10^{-3}$. Rows: \emph{Full GP} keeps all observations. \emph{Oracle} deletes exactly the planted misfit observations, which no real method can know, and serves as a best-case reference. \emph{F2I-style} deletes greedily to minimise predictive variance in $\mathcal A$. The remaining three rows are the single-observation test and the group test in classical and in Gale coordinates. Columns: number of observations kept, refitted noise variance, mean squared error and NLPD in $\mathcal A$, coverage of the $95\%$ predictive intervals, and the fractions of misfit and of honest observations deleted. Arrows indicate whether lower or higher values are better. Bold marks the best deletion rule in each column, where best means closest to the true noise variance and to the nominal coverage $0.95$. The Full GP and Oracle rows are references and are not marked.}\label{tab:s1}
\end{table}

\begin{table}[!htbp]
\centering
\small
\resizebox{\linewidth}{!}{\begin{tabular}{lrrrrrrr}
\toprule
Method & Kept & $\hat\sigma^2$ ($\times10^{-3}$) & MSE ($\times10^{-4}$) $\downarrow$ & Coverage & NLPD $\downarrow$ & Misfit del. $\uparrow$ & Honest del. $\downarrow$\\
\midrule
Full GP & 40.0 & 222.49 & 53.6 $\pm$ 49.4 & 1.00 $\pm$ 0.00 & 0.21 & 0.00 & 0.000\\
Oracle & 26.7 & 2.45 & 5.5 $\pm$ 6.0 & 0.93 $\pm$ 0.05 & -1.45 & 1.00 & 0.000\\
F2I-style & 11.9 & 0.03 & 15.3 $\pm$ 9.3 & 0.14 $\pm$ 0.08 & 143.86 & 0.80 & 0.648\\
Single-observation test & 30.7 & 3.77 & 6.3 $\pm$ 5.8 & \textbf{0.95 $\pm$ 0.05} & -1.41 & 0.64 & 0.032\\
Group test, classical & 28.5 & \textbf{2.86} & 5.7 $\pm$ 6.2 & 0.94 $\pm$ 0.05 & \textbf{-1.43} & \textbf{0.81} & 0.027\\
Group test, Gale coordinates & 29.6 & 10.44 & \textbf{5.6 $\pm$ 4.2} & 0.97 $\pm$ 0.04 & -1.22 & 0.75 & \textbf{0.015}\\
\bottomrule
\end{tabular}}
\caption{S2, a function that oscillates faster than the kernel can follow for $x<-0.3$. Same layout as Table~\ref{tab:s1}.}\label{tab:s2}
\end{table}

\begin{figure}[!htbp]
\centering
\includegraphics[width=\linewidth]{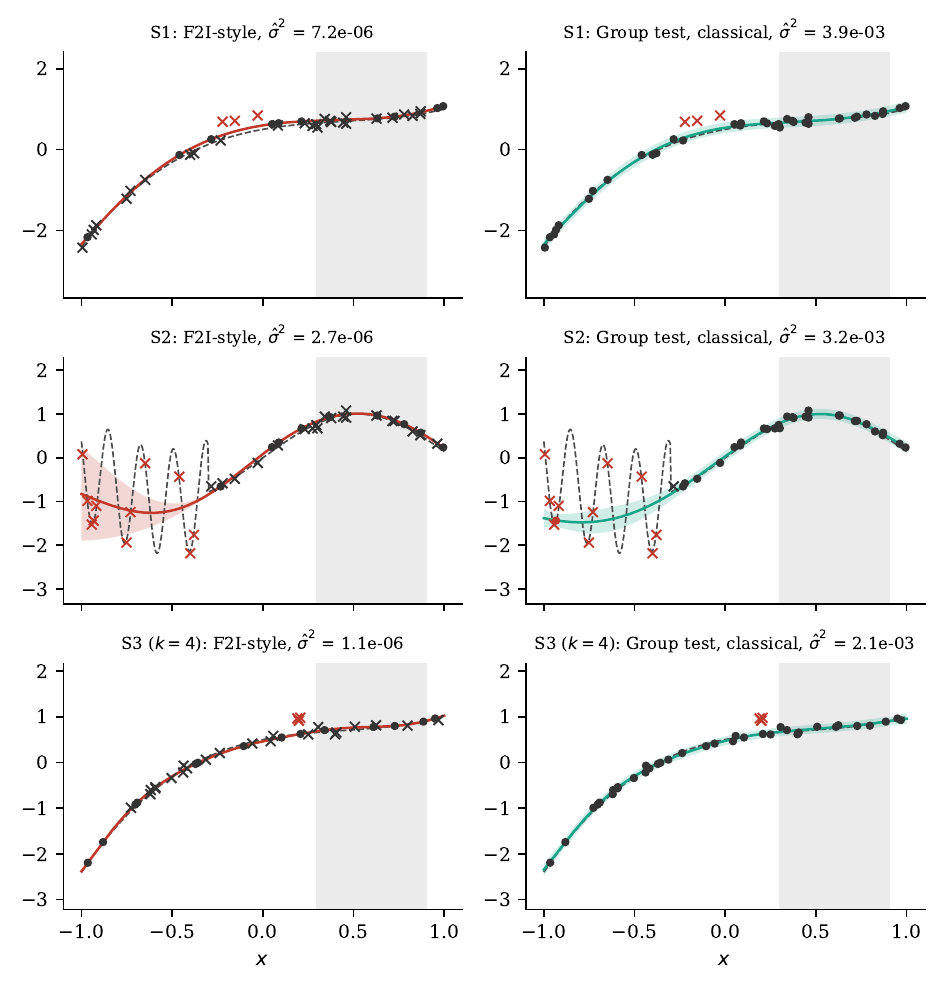}
\caption{One seed of each experiment. Left: F2I-style deletion. Right: the classical group test. Dots are kept observations and crosses deleted ones. Red marks the planted misfit observations, and the dashed curve is the latent function. Lines and bands show the predictive mean and the $95\%$ predictive interval for new observations. The target region $\mathcal A$ is shaded. F2I-style deletion removes most honest observations and leaves intervals that are far too narrow. The group test removes the misfit observations and little else.}\label{fig:examples}
\end{figure}

\begin{figure}[!htbp]
\centering
\includegraphics[width=0.75\linewidth]{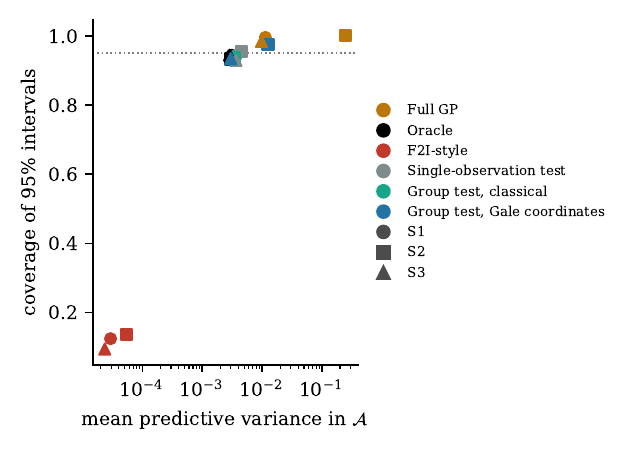}
\caption{Sharpness against calibration in S1, S2 and S3 ($k=4$). Each marker is the mean over 20 seeds for one method and one experiment. The dotted line marks the nominal coverage of $0.95$. A small predictive variance is only useful if the coverage remains close to this line.}\label{fig:calibration}
\end{figure}

\paragraph{Deletion trajectories.} Figure~\ref{fig:trajectories} follows F2I-style deletion one step at a time, refitting the noise and evaluating the predictions in $\mathcal A$ after every single deletion, in order to see where it goes wrong. Each column is one experiment, and the horizontal axis counts the observations deleted so far. The top row shows, for each step, the fraction of seeds in which the observation deleted at that step is a planted misfit one, so a bar near $1$ means that a misfit observation was removed and a bar near $0$ that an honest one was. The middle row shows the refitted noise variance relative to the truth, where $1$ is correct, and the bottom row the coverage of the $95\%$ predictive intervals in $\mathcal A$, where $0.95$ is ideal. Thin lines are individual seeds and thick lines their median. The dashed vertical line marks the number of planted misfit observations, which is where a method that deleted exactly the misfit observations would stop. The green lines follow the classical group test in the same way, one selected group at a time in its own order of selection, and markers show where each method stops in each seed, red for F2I-style deletion and green for the group test, so that its path and stopping point can be compared with those of F2I-style deletion and with the ideal stopping point. The group test stops after a few deletions, mostly of misfit observations, so its noise variance ends close to the truth and its coverage close to $0.95$. The early deletions mostly hit misfit observations: about $70\%$ of the first deletions in each experiment are misfit ones. When as many observations have been deleted as there are misfit ones, the median fitted noise variance in S1 and S3 is within $2\%$ of the true value, and the median coverage is $0.94$. At that point F2I-style deletion is essentially as good as the oracle. It does not stop there, however, because every further deletion still lowers its objective. Removing an honest observation whose residual is large by chance lowers the refitted noise, and hence the predictive variance in $\mathcal A$, just as removing a misfit one does. F2I-style deletion stops only when so few observations remain that losing another raises the predictive variance more than the smaller noise lowers it. The group test stops for a different reason: it deletes a group only if that group is statistically surprising under the reference noise, and once the misfit observations are gone no remaining group passes the Bonferroni threshold. One rule stops when deleting no longer pays, the other when nothing left is surprising. It goes on to delete between $17$ and $25$ honest observations on average, the fitted noise falls two to three orders of magnitude below the truth, and coverage collapses. The group test stops close to the point where F2I-style deletion should have stopped (green markers). The failure of F2I-style deletion is therefore not in what it deletes first, but in the absence of a stopping rule that distinguishes a misfit observation from an honest one with a large residual. A test-based stopping rule supplies such a rule, approximately calibrated, as the next paragraph shows.

\paragraph{Calibration of the complete pipeline.} Equation~(\ref{eq:chisq}) is exact only for a fixed group and a specified covariance, whereas S1--S3 estimate the kernel from the same data, estimate the reference noise, and select and remove groups adaptively. Table~\ref{tab:calibration} therefore runs the complete pipeline on $1000$ data sets per configuration that contain no misfit observations, generated from the cubic of S1 with uniform inputs or with the cluster of S3. Any deletion is then a false rejection. The pipeline is moderately liberal. At least one observation is deleted in $6.7$ to $7.6\%$ of the data sets by the group tests and in $8.8$ to $9.4\%$ by the single-observation test, against a nominal $5\%$, and on average fewer than $1\%$ of the observations are deleted. The data are generated from a fixed cubic rather than drawn from the GP, so this benchmark does not isolate the calibration of the test under its own null hypothesis. The excess over the nominal level may reflect both the plug-in of estimated hyperparameters \cite{bachoc2013} and the mismatch between the cubic and the GP prior. Table~\ref{tab:calibrationgp} separates the two effects with data drawn from the GP itself, $1000$ data sets per design. With the true kernel and noise variance, the single-observation test deletes something in $4.8$ to $5.3\%$ of the data sets, close to the nominal $5\%$, and the group tests in $1.5$ to $3.2\%$, because the Bonferroni correction over overlapping candidate groups is conservative. With hyperparameters estimated from the same data, the rates rise to $6.3$ to $8.5\%$. The excess over the nominal level in Table~\ref{tab:calibration} is therefore mainly caused by estimating the hyperparameters, not by the mismatch between the cubic and the GP prior. It is small compared with the deletions of F2I-style selection, but the stopping rule should be read as approximately rather than exactly calibrated.

\begin{table}[!htbp]
\centering
\small
\begin{tabular}{lrrrr}
\toprule
 & \multicolumn{2}{c}{Uniform inputs} & \multicolumn{2}{c}{With a cluster}\\
Method & Any deletion $\downarrow$ & Deleted $\downarrow$ & Any deletion $\downarrow$ & Deleted $\downarrow$\\
\midrule
Single-observation test & 0.088 & \textbf{0.0036} & 0.094 & \textbf{0.0044}\\
Group test, classical & \textbf{0.073} & 0.0059 & \textbf{0.067} & 0.0082\\
Group test, Gale coordinates & 0.076 & 0.0060 & 0.072 & 0.0087\\
\bottomrule
\end{tabular}
\caption{Calibration of the complete deletion pipeline of S1--S3 when the data contain no misfit observations, over 1000 seeds per configuration. ``Any deletion'' is the fraction of seeds in which at least one observation is deleted, which the Bonferroni stopping rule is meant to keep near $0.05$. ``Deleted'' is the mean fraction of the 40 observations deleted. Bold marks the best value in each column.}\label{tab:calibration}
\end{table}

\begin{table}[!htbp]
\centering
\small
\begin{tabular}{lrrrr}
\toprule
 & \multicolumn{2}{c}{Uniform inputs} & \multicolumn{2}{c}{With a cluster}\\
Method & Fixed & Estimated & Fixed & Estimated\\
\midrule
Single-observation test & 0.053 & 0.084 & 0.048 & 0.085\\
Group test, classical & 0.025 & 0.072 & 0.015 & 0.063\\
Group test, Gale coordinates & 0.032 & 0.076 & 0.023 & 0.067\\
\bottomrule
\end{tabular}
\caption{Calibration under a true GP null: fraction of 1000 data sets drawn from the GP in which at least one observation is deleted, against a nominal $0.05$. ``Fixed'' uses the true kernel and noise variance, ``Estimated'' the pipeline of S1--S3 with hyperparameters fitted to the same data.}\label{tab:calibrationgp}
\end{table}

\begin{figure}[!htbp]
\centering
\includegraphics[width=\linewidth]{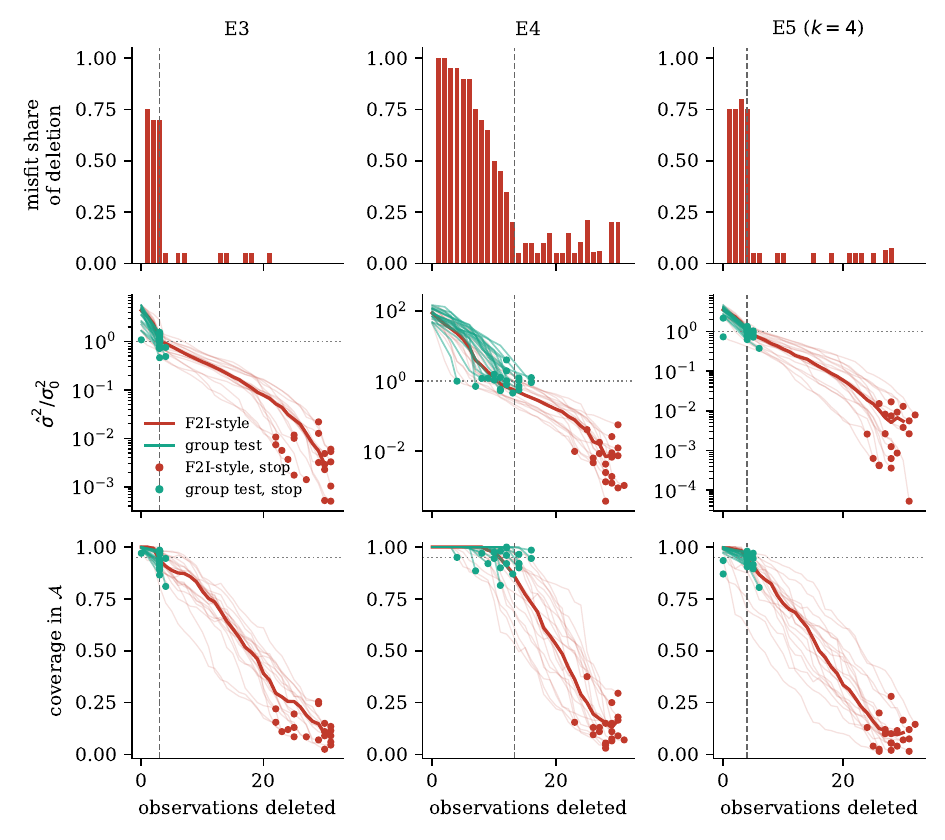}
\caption{Deletion trajectories of F2I-style deletion, from no deletion until its stopping point. Top: fraction of seeds in which the observation deleted at that step is a misfit one. Middle: refitted noise variance relative to the truth (logarithmic scale). Bottom: coverage of the $95\%$ predictive intervals in $\mathcal A$. Thin lines are individual seeds and thick lines the median. The dashed vertical line marks the number of misfit observations, and the dotted horizontal lines mark the ideal values. Green lines follow the classical group test one selected group at a time. Markers show where each method stops in each seed, red for F2I-style deletion and green for the group test.}\label{fig:trajectories}
\end{figure}

\paragraph{Masking.} Table~\ref{tab:s3} and Figure~\ref{fig:masking} show the masked batch of S3. Each observation in the batch is predicted well by its shifted neighbours when it is held out alone, so the single-observation test finds only $68\%$ of a batch of four, and its error in the target region is $2.4$ times that of the group tests. The gap widens with the batch size. At $k=5$ the single-observation test finds half of the batch, while the group tests still find $78\%$. When the batch lies inside the target region, every test fails to find it, and the error in $\mathcal A$ is about twenty times that of the oracle. The reason is the reference noise. It is estimated from the observations in $\mathcal A$, which now include the batch, and its mean of $2.2\times10^{-2}$ is about six times the value in the other S3 runs. Against this inflated reference nothing looks surprising. This is a genuine limitation of a target-region reference, discussed in Section~\ref{sec:tests}.

\begin{table}[!htbp]
\centering
\small
\resizebox{\linewidth}{!}{\begin{tabular}{lrrrrrrr}
\toprule
Method & Kept & $\hat\sigma^2$ ($\times10^{-3}$) & MSE ($\times10^{-4}$) $\downarrow$ & Coverage & NLPD $\downarrow$ & Misfit del. $\uparrow$ & Honest del. $\downarrow$\\
\midrule
Full GP & 40.0 & 8.33 & 29.3 $\pm$ 12.7 & 0.98 $\pm$ 0.03 & -1.11 & 0.00 & 0.000\\
Oracle & 36.0 & 2.31 & 3.9 $\pm$ 2.6 & 0.94 $\pm$ 0.02 & -1.50 & 1.00 & 0.000\\
F2I-style & 11.8 & 0.01 & 24.2 $\pm$ 34.2 & 0.09 $\pm$ 0.07 & 649.65 & \textbf{0.91} & 0.682\\
Single-observation test & 36.8 & 3.03 & 17.8 $\pm$ 24.3 & \textbf{0.93 $\pm$ 0.05} & -1.32 & 0.68 & 0.014\\
Group test, classical & 36.1 & \textbf{2.47} & \textbf{7.3 $\pm$ 11.2} & \textbf{0.93 $\pm$ 0.04} & \textbf{-1.45} & 0.90 & \textbf{0.008}\\
Group test, Gale coordinates & 36.1 & \textbf{2.47} & \textbf{7.3 $\pm$ 11.2} & \textbf{0.93 $\pm$ 0.04} & \textbf{-1.45} & 0.90 & \textbf{0.008}\\
\bottomrule
\end{tabular}}
\caption{S3, a masked batch of $k=4$ shifted observations just outside the target region. Same layout as Table~\ref{tab:s1}.}\label{tab:s3}
\end{table}

\begin{figure}[!htbp]
\centering
\includegraphics[width=\linewidth]{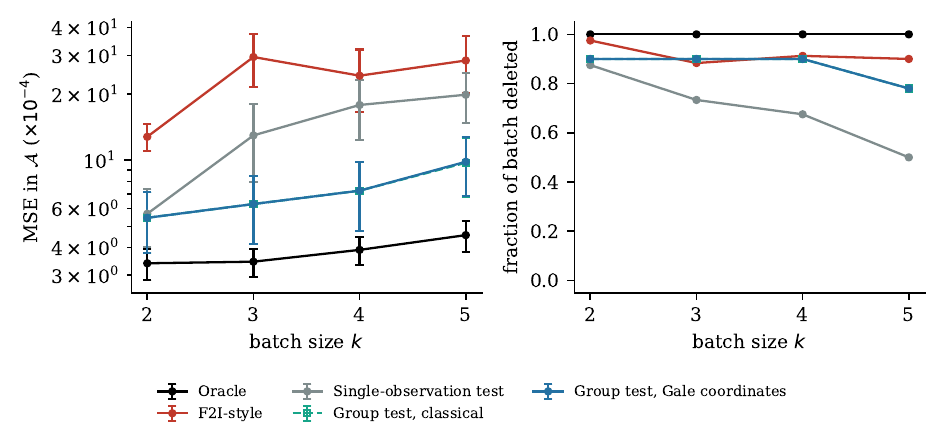}
\caption{S3, a masked batch of $k$ shifted observations. Left: mean squared error in the target region, with error bars of one standard error over 20 seeds (logarithmic scale). Right: fraction of the batch deleted. The two group tests coincide.}\label{fig:masking}
\end{figure}

\paragraph{Gale coordinates against the classical group test.} Table~\ref{tab:agreement} compares the two group tests directly. In S3 they make identical decisions in $99$ of the $100$ runs, including the run with the batch inside $\mathcal A$. The gain over single-observation tests in S3 is therefore entirely a gain of testing groups, which the classical test delivers without any Gale vocabulary. In S1 and S2 the two tests differ more often, because the Gale version uses a hard spectral threshold that approximates the soft operator $S$ of Section~\ref{sec:gp}. These differences never favour the Gale version: in S1 it finds fewer misfit observations ($80\%$ against $95\%$), and in S2 it leaves a noise variance nearly four times larger, with equal accuracy. These results quantify what the hard truncation costs. Together with the identity of Section~\ref{sec:tests}, they show that, for data already collected, the improvement comes from testing the right group jointly, not from the coordinates in which the test is expressed. The Gale view does not detect more discrepancies in such data. Its contribution is the prospective one shown in E1 to E3: before data are collected, it says which assumptions a design can test and with what power.

\begin{table}[!htbp]
\centering
\small
\resizebox{\linewidth}{!}{\begin{tabular}{lrrrr}
\toprule
Experiment & Identical decisions & MSE, single ($\times10^{-4}$) $\downarrow$ & MSE, classical ($\times10^{-4}$) $\downarrow$ & MSE, Gale ($\times10^{-4}$) $\downarrow$\\
\midrule
S1 & 17/20 & 7.0 & 5.6 & 6.6\\
S2 & 3/20 & 6.3 & 5.7 & 5.6\\
S3, $k=2$ & 20/20 & 5.7 & 5.5 & 5.5\\
S3, $k=3$ & 20/20 & 13.0 & 6.3 & 6.3\\
S3, $k=4$ & 20/20 & 17.8 & \textbf{7.3} & \textbf{7.3}\\
S3, $k=5$ & 19/20 & 19.8 & \textbf{9.7} & \textbf{9.8}\\
S3, $k=4$, batch inside $\mathcal A$ & 20/20 & 99.6 & 99.6 & 99.6\\
\bottomrule
\end{tabular}}
\caption{Group tests in Gale coordinates against classical group tests with the same candidate groups, reference noise and stopping rule. ``Identical decisions'' counts the seeds in which both delete exactly the same observations. The single-observation test is shown for comparison. Means over 20 seeds. Bold marks a method whose MSE is smaller than that of at least one other method by more than two paired standard errors over the seeds, and not significantly larger than that of any. Arrows indicate that lower values are better.}\label{tab:agreement}
\end{table}

\sloppy

\end{document}